\def\eqref#1{equation~\ref{#1}}
\def\1{\bm{1}}
\def\rvs{{\mathbf{s}}}
\def\rvx{{\mathbf{x}}}
\def\rvy{{\mathbf{y}}}
\def\rvz{{\mathbf{z}}}
\DeclareMathAlphabet{\mathsfit}{\encodingdefault}{\sfdefault}{m}{sl}
\SetMathAlphabet{\mathsfit}{bold}{\encodingdefault}{\sfdefault}{bx}{n}
\def\gY{{\mathcal{Y}}}
\newcommand{\E}{\mathbb{E}}
\newcommand{\R}{\mathbb{R}}
\newcommand{\softmax}{\mathrm{softmax}}
\newcommand{\KL}{D_{\mathrm{KL}}}
\newcommand{\Var}{\mathrm{Var}}
\DeclareMathOperator*{\argmax}{arg\,max}
\DeclareMathOperator*{\argmin}{arg\,min}
\crefname{thmdef}{definition}{definitions}
\crefname{lemma}{lemma}{lemmas}
\crefname{prop}{proposition}{propositions}
\Crefname{algocf}{Algorithm}{Algorithms}
\newtheorem{prop}{Proposition}
\newtheorem{appprop}{Proposition}
\newtheorem{thm}{Theorem}
\newtheorem{appthm}{Theorem}
\newlength\myindent
\icmltitlerunning{Learning to Maximize Mutual Information for Dynamic Feature Selection}
\begin{document}

\twocolumn[
% \icmltitle{Learning a Greedy Policy for Dynamic Feature Selection}
% \icmltitle{Greedy Information Maximization for Dynamic Feature Selection}
% \icmltitle{Mutual Information Maximization for Dynamic Feature Selection}
% \icmltitle{Greedy Information Maximization for Dynamic Feature Selection}
% \icmltitle{Learning to Maximize Mutual Information for Dynamic Feature Selection}
% \icmltitle{Learning to Dynamically Select Features by Mutual Information}
% \icmltitle{Information-Guided Dynamic Feature Selection: An Amortized Optimization Approach}
% \icmltitle{Dynamic Feature Selection by Greedy Information Maximization}
% \icmltitle{Dynamic Feature Selection by Mutual Information Maximization}
% \icmltitle{Learning to Dynamically Select Features by Conditional Mutual Information}
% \icmltitle{Learning to Select: Dynamic Information Maximization}
% \icmltitle{Learning to Dynamically Select Features by Mutual Information Maximization}
% \icmltitle{Dynamic Feature Selection via Conditional Mutual Information Maximization}
% \icmltitle{Dynamic Feature Selection via Mutual Information Maximization}
% \icmltitle{Learning Dynamic Feature Selection via Mutual Information Maximization}
% \icmltitle{A Learning Approach for Dynamic Feature Selection by Conditional Mutual Information}
% \icmltitle{Learning to Maximize Conditional Mutual Information for Dynamic Feature Selection}
\icmltitle{Learning to Maximize Mutual Information for Dynamic Feature Selection}
% \icmltitle{Learning an Information-Theoretic Policy for Dynamic Feature Selection}

% You can specify symbols, otherwise they are numbered in order.
% Ideally, you should not use this facility. Affiliations will be numbered
% in order of appearance and this is the preferred way.
\icmlsetsymbol{equal}{*}

\begin{icmlauthorlist}
\icmlauthor{Ian Covert}{cse}
\icmlauthor{Wei Qiu}{cse}
\icmlauthor{Mingyu Lu}{cse}
\icmlauthor{Nayoon Kim}{cse}
\icmlauthor{Nathan White}{med}
\icmlauthor{Su-In Lee}{cse}
\end{icmlauthorlist}

\icmlaffiliation{cse}{Paul G. Allen School of Computer Science \& Engineering, University of Washington}
% \icmlaffiliation{cse}{Computer Science, University of Washington}
\icmlaffiliation{med}{Department of Emergency Medicine, University of Washington}

\icmlcorrespondingauthor{Ian Covert}{icovert@cs.uw.edu}

% You may provide any keywords that you
% find helpful for describing your paper; these are used to populate
% the "keywords" metadata in the PDF but will not be shown in the document
\icmlkeywords{Machine Learning, ICML, Dynamic Feature Selection, Sequential Feature Selection, Active Feature Acquisition, Amortized Optimization, Differentiable Feature Selection, Concrete Distribution}

\vskip 0.3in
]

% this must go after the closing bracket ] following \twocolumn[ ...

% This command actually creates the footnote in the first column
% listing the affiliations and the copyright notice.
% The command takes one argument, which is text to display at the start of the footnote.
% The \icmlEqualContribution command is standard text for equal contribution.
% Remove it (just {}) if you do not need this facility.

\printAffiliationsAndNotice{}  % leave blank if no need to mention equal contribution
% \printAffiliationsAndNotice{\icmlEqualContribution} % otherwise use the standard text.

% Abstract
\begin{abstract}
Feature selection helps reduce data acquisition costs in ML, but the standard approach is to train models with static feature subsets. Here, we consider the \textit{dynamic feature selection} (DFS) problem where a model sequentially queries features based on the presently available information. DFS is often addressed with reinforcement learning, but we explore a simpler approach of greedily selecting features based on their conditional mutual information. This method is theoretically appealing but requires oracle access to the data distribution, so we develop a learning approach based on amortized optimization. The proposed method is shown to recover the greedy policy when trained to optimality, and it outperforms numerous existing feature selection methods in our experiments, thus validating it as a simple but powerful approach for this problem.
\end{abstract} \label{sec:abstract}

% Introduction
\section{Introduction} \label{sec:introduction}
A machine learning model's inputs can be costly to obtain, and feature selection is often used to reduce data acquisition costs. In applications where information is gathered sequentially, a natural option is to select features adaptively based on the currently available information, rather than using a fixed feature set. This setup is known as \textit{dynamic feature selection} (DFS),\footnote{The problem has also been referred to as \textit{sequential feature selection}, \textit{active feature acquisition}, and \textit{information pursuit}.}
% \footnote{The problem is also sometimes referred to as \textit{sequential feature selection} or \textit{active feature acquisition}.},
and the problem has been considered by several works in the last decade \citep{saar2009active, dulac2011datum, chen2015sequential, early2016test, he2016active, kachuee2018opportunistic}.

Compared to \textit{static} feature selection with a fixed feature set \citep{li2017feature, cai2018feature}, DFS can offer better performance given a fixed budget. This is easy to see, because selecting the same features for all instances (e.g., all patients visiting a hospital's emergency room) is suboptimal when the most informative features vary across individuals. Although it should in theory offer better performance, DFS also presents a more challenging learning problem, because it requires learning both (i)~a feature selection policy and (ii)~how to make predictions given variable feature sets.

Prior work has approached DFS in several ways, though often using reinforcement learning (RL) \citep{dulac2011datum, shim2018joint, kachuee2018opportunistic, janisch2019classification, li2021active}. RL is a natural approach for sequential decision-making problems, but current methods are difficult to train and do not reliably outperform static feature selection methods \citep{henderson2018deep, erion2021coai}. Our work therefore explores a simpler approach: greedily selecting features based on their conditional mutual information (CMI) with the response variable.

The greedy approach is known from prior work \citep{chen2015sequential, ma2019eddi}, but it is difficult to use in practice because calculating the CMI requires oracle access to the data distribution \citep{cover2012elements}. Our focus is therefore developing a practical approximation. Whereas previous work makes strong assumptions about the data \citep{geman1996active}
% (e.g., conditional independence in \citealt{geman1996active})
% (e.g., binary features in \citealt{fleuret2004fast})
or approximates the data distribution with generative models \citep{ma2019eddi}, we develop a flexible approach that directly predicts the optimal selection at each step. Our method is based on a variational perspective on the greedy CMI policy, and it uses a technique known as \textit{amortized optimization} \citep{amos2022tutorial} to enable training using only a standard labeled dataset. Notably, the model is trained with an objective function that recovers the greedy policy when it is trained to optimality.

Our contributions in this work are the following:

\begin{enumerate}[leftmargin=0.5cm]
    \item We derive a variational, or optimization-based perspective on
    % view of
    the greedy CMI policy, which shows it to be equivalent to minimizing the one-step-ahead prediction loss given an optimal classifier.

    \item We develop a learning approach based on amortized optimization, where a policy network is trained to directly predict the optimal selection at each step. Rather than requiring a dataset that indicates the correct selections, our training approach is based on
    % standard labeled data
    a standard labeled dataset
    and an objective
    function
    whose global optimizer is the greedy CMI policy.

    \item We propose a continuous relaxation for the inherently discrete learning objective, which enables efficient and architecture-agnostic gradient-based optimization.
    % training with stochastic gradient descent.
\end{enumerate}

Our experiments evaluate the proposed method on numerous datasets, and the results show that it outperforms many recent dynamic and static feature selection methods. Overall, our work shows that when learned properly, the greedy CMI policy is a simple and powerful approach for DFS.

% Problem formulation
\section{Problem formulation} \label{sec:formulation}
In this section, we describe the DFS problem and introduce notation used throughout the paper.

\subsection{Notation}

Let $\rvx$ denote a vector of input features and $\rvy$ a response variable for a supervised learning task. The input consists of $d$ distinct features, or $\rvx = (\rvx_1, \ldots, \rvx_d)$. We use the notation $s \subseteq [d] \equiv \{1, \ldots, d\}$ to denote a subset of indices and $\rvx_s = \{\rvx_i : i \in s\}$ a subset of features. Bold symbols $\rvx, \rvy$ represent random variables, the symbols $x, y$ are possible values, and $p(\rvx, \rvy)$ denotes the data distribution.

Our goal is to design a policy that controls which features are selected given the currently available information. The
% feature
selection policy can be viewed as a function $\pi(x_s) \in [d]$, meaning that it receives a subset of features as its input and outputs the next feature index to query. The policy is accompanied by a predictor $f(x_s)$ that can make predictions given the set of available features; for example, if $\rvy$ is discrete then predictions lie in the probability simplex, or $f(x_s) \in \Delta^{K - 1}$ for $K$ classes. The notation $f(x_s \cup x_i)$ represents the prediction given the combined features.
% $x_{s \cup \{i\}}$.
% The paper initially considers
We initially consider policy and predictor functions that operate on feature subsets,
% because these are simpler to analyze,
and \Cref{sec:method} proposes an implementation using a mask variable $m \in [0, 1]^d$ where the functions operate on $x \odot m$.

\subsection{Dynamic feature selection}

The goal of DFS is to select features with minimal budget that achieve maximum predictive accuracy. Having access to more features generally makes prediction easier, so the challenge is selecting a small number of informative features. There are multiple formulations for this problem, including versions with non-uniform feature costs and different budgets for each sample \citep{kachuee2018opportunistic}, but we focus on the setting with a fixed budget and uniform costs. Our goal is to handle predictions at inference time by beginning with no features, sequentially selecting features $x_s$ such that $|s| = k$ for a fixed budget $k < d$, and finally making accurate predictions for the response variable $y$.

Given a loss function that measures the discrepancy between predictions and labels $\ell(\hat{y}, y)$, a natural scoring criterion is the expected loss after selecting $k$ features. The scoring is applied to a policy-predictor pair $(\pi, f)$, and we define the score for a fixed budget $k$ as follows,
\begin{equation}
    v_k(\pi, f) = \E_{p(\rvx, \rvy)} \big[ \ell\big(f\big(\{\rvx_{i_t}\}_{t=1}^k \big), \rvy \big) \big], \label{eq:scoring}
\end{equation}
where feature indices are chosen sequentially for each
% $(x, y)$ according to $i_n = \pi(\{ x_{i_t} \}_{t = 1}^{n - 1})$.
$(\rvx, \rvy)$ according to $i_n = \pi(\{ \rvx_{i_t} \}_{t = 1}^{n - 1})$.
The goal is to minimize $v_k(\pi, f)$, or equivalently, to maximize our final predictive accuracy.

One approach is to frame this as a Markov decision process (MDP) and solve it using standard RL techniques, so that $\pi$ and $f$ are trained to optimize a reward function based on \cref{eq:scoring}. Several recent works have designed such formulations \citep{shim2018joint, kachuee2018opportunistic, janisch2019classification, li2021active}. However, these approaches are difficult to train effectively,
% optimize,
so our work focuses on a greedy approach that is easier to learn and simpler to interpret.

% Greedy information maximization
\section{Greedy information maximization} \label{sec:greedy}
This section first defines the greedy CMI policy, and then describes an existing approximation strategy that relies on generative models.
% based on generative modeling.

\subsection{The greedy selection policy} \label{sec:cmi}

As an idealized approach to DFS, we are interested in the greedy algorithm that selects the most informative feature at each step. This feature can be defined in multiple ways, but we focus on the information-theoretic perspective that the most useful feature has maximum CMI with the response variable \citep{cover2012elements}. The CMI, denoted as $I(\rvx_i; \rvy \mid x_s)$, quantifies how much information an unknown feature $\rvx_i$ provides about the response $\rvy$ when accounting for the current features $x_s$, and it is defined as the KL divergence between the joint and factorized distributions:
\begin{equation*}
    I(\rvx_i; \rvy \mid x_s) = \KL \big( p(\rvx_i, \rvy \mid x_s) \mid\mid p(\rvx_i \mid x_s) p(\rvy \mid x_s) \big). \label{eq:cmi}
\end{equation*}
Based on this, we define the greedy CMI policy as $\pi^*(x_s) \equiv \argmax_i I(\rvx_i; \rvy \mid x_s)$, so that features are sequentially selected to maximize our information about the response variable. We can alternatively understand the policy as performing greedy uncertainty minimization, because this is equivalent to minimizing $\rvy$'s conditional entropy at each step, or $\pi^*(x_s) = \argmin_i H(\rvy \mid \rvx_i, x_s)$ \citep{cover2012elements}. For a complete characterization of this idealized approach,
% to DFS,
we also consider that the policy is paired with the Bayes classifier as a predictor, or $f^*(x_s) = p(\rvy \mid x_s)$.

Maximizing the
% additional
information about $\rvy$ at each step is intuitive and should be effective
% an effective approach
in many problems. Prior work has considered the same idea, but from two perspectives that differ from ours. First,
% from a theoretical perspective,
\citet{chen2015sequential} take a theoretical perspective and prove that the greedy algorithm
achieves performance within a multiplicative factor of the optimal policy;
% has bounded suboptimality relative to the optimal policy-predictor pair;
the proof requires specific distributional assumptions, but we find that the greedy algorithm
% still
performs well with many real-world datasets
% in practice
(\Cref{sec:experiments}).
Second, from an implementation perspective, two works aim to provide practical approximations; however, these suffer from several limitations, so our work aims to develop a simple and flexible alternative (\Cref{sec:method}). In these works, \citet{ma2019eddi} and \citet{chattopadhyay2022interpretable} both require a conditional generative model of the data distribution, which we discuss next.
% Old version
% % Second, from an implementation perspective, two works aim to provide practical approximations; however, these suffer from several limitations, so our work aims to develop a simple and flexible alternative (\Cref{sec:method}). In these works, \citet{fleuret2004fast} requires binary features, and \citet{ma2019eddi} requires a conditional generative model of the data distribution, which we discuss next.
% Super old version
% % Second, from an implementation perspective, two works aim to provide practical approximations: however, \citet{fleuret2004fast} requires binary features, and \citet{ma2019eddi} requires a conditional generative model of the data distribution, which we discuss next. These suffer from several limitations, so work aims to develop a simple and flexible alternative (\Cref{sec:method}).

\subsection{Estimating conditional mutual information} \label{sec:iterative}

The greedy policy is trivial to implement if we can directly calculate CMI,
% the CMI $I(\rvy; \rvx_i \mid x_s)$,
but this is rarely the case in practice. Instead, one option is to estimate it. We now describe a procedure to do so iteratively for each feature, assuming for now that we have oracle access to the response distributions $p(\rvy \mid \rvx_s)$ for all $s \subseteq [d]$ and the feature distributions $p(\rvx_i \mid \rvx_s)$ for all $s \subseteq [d]$ and $i \in [d]$.

At any point in the selection procedure, given the current features $x_s$, we can
estimate the CMI for a feature $\rvx_i$ where $i \notin s$ as follows. First, we can sample multiple values for $\rvx_i$ from its conditional distribution, or $x_i^j \sim p(\rvx_i \mid x_s)$ for $j \in [n]$. Next, we can generate Bayes optimal predictions for each sampled value, or $p(\rvy \mid x_s, x_i^j)$. Finally, we can calculate the mean prediction and the mean KL divergence relative to this prediction, which yields the following CMI estimator:
\begin{equation}
    I_i^n = \frac{1}{n} \sum_{j = 1}^n \KL\Big( p(\rvy \mid x_s, x_i^j) \mid\mid \frac{1}{n} \sum_{l = 1}^n p(\rvy \mid x_s, x_i^l) \Big). \label{eq:criterion}
\end{equation}
This score measures the variability among predictions and captures whether different $\rvx_i$ values significantly affect $\rvy$'s conditional distribution. The estimator can be used to select features, or we can set $\pi(x_s) = \argmax_i I_i^n$, due to the following limiting result (see \Cref{app:proofs}):
\begin{equation}
    \lim_{n \to \infty} I_i^n = I(\rvy; \rvx_i \mid x_s). \label{eq:limiting}
\end{equation}
This procedure thus provides a way to identify the correct greedy selections by estimating the CMI. Prior work has explored similar ideas for scoring features based on sampled predictions
% feature scoring ideas
\citep{saar2009active, chen2015value, early2016test, early2016dynamic}, but the implementation choices in these works prevent them from performing greedy information maximization. In \cref{eq:criterion}, is it important that our estimator uses the Bayes classifier, that we sample features from the conditional distribution $p(\rvx_i \mid x_s)$, and that we use the KL divergence as a measure of prediction variability. However, this estimator is impractical because we typically lack access to both $p(\rvy \mid \rvx_s)$ and $p(\rvx_i \mid \rvx_s)$.

In practice, we would instead require
% use
learned substitutes for each distribution. For example, we can use a a classifier that approximates $f(x_s) \approx p(\rvy \mid x_s)$ and a generative model that approximates samples from $p(\rvx_i \mid \rvx_s)$. Similarly, \citet{ma2019eddi} propose jointly modeling $(\rvx, \rvy)$ with a conditional generative model, which is implemented via a modified VAE \citep{kingma2015variational}. This approach is limited for several reasons, including (i)~the difficulty of training an accurate conditional generative model, (ii)~the challenge of modeling mixed continuous/categorical features \citep{ma2020vaem, nazabal2020handling}, and (iii)~the slow CMI estimation process. In our approach, which we discuss next, we bypass all three of these challenges by directly predicting the best selection at each step.

% Our approach
\section{Proposed method} \label{sec:method}
We now introduce our approach, a practical approximation of the greedy policy trained using amortized optimization. Unlike prior work that estimates the CMI as an intermediate step, we develop a variational perspective on the greedy policy, which we then leverage to train a network that directly predicts the optimal selection given the current features.

\subsection{A variational perspective on CMI} \label{sec:variational}

For our purpose, it is helpful to recognize that the greedy policy can be viewed as the solution to an optimization problem. \Cref{sec:greedy} provides a conventional definition of CMI as a KL divergence, but this is difficult to integrate into an end-to-end learning approach. Instead, we now consider the one-step-ahead prediction achieved by a policy $\pi$ and predictor $f$, and we determine the behavior that minimizes their loss. Given the current features $x_s$ and a selection $i = \pi(x_s)$, the expected one-step-ahead loss is:
\begin{equation}
    \E_{\rvy,\rvx_i \mid x_s} \Big[ \ell\big(f(x_s \cup \rvx_i) , \rvy \big) \Big]. \label{eq:one-step}
\end{equation}
The variational perspective we develop here consists of two main results regarding this expected loss. The first result relates to the predictor, and we show that the loss-minimizing predictor can be defined independently of the policy $\pi$. We formalize this in the following proposition for classification tasks, and our results can also be generalized to regression tasks (see proofs in \Cref{app:proofs}).

\vskip 0.3cm
\begin{prop} \label{prop:bayes}
    When $\rvy$ is discrete and $\ell$ is cross-entropy loss, \cref{eq:one-step} is minimized for any policy $\pi$ by the Bayes classifier, or $f^*(x_s) = p(\rvy \mid x_s)$.
\end{prop}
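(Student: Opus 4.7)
The plan is to reduce the minimization of the expected one-step-ahead loss to a pointwise minimization problem at each possible input to the predictor, and then invoke Gibbs' inequality to identify the Bayes classifier as the pointwise optimum.

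First, I would rewrite \cref{eq:one-step} by conditioning on $\rvx_i$ via the tower property,
\begin{equation*}
    \E_{\rvx_i \mid x_s} \Big[ \E_{\rvy \mid x_s, \rvx_i} \big[ -\log f(x_s \cup \rvx_i)(\rvy) \big] \Big],
\end{equation*}
using the fact that cross-entropy loss is $\ell(\hat y, y) = -\log \hat y(y)$ for a predicted distribution $\hat y \in \Delta^{K-1}$. The crucial observation is that the inner expectation, for each realization $\rvx_i = x_i$, depends on $f$ only through the single vector $f(x_s \cup x_i)$. Hence minimizing the outer expectation over $f$ reduces to minimizing the inner expectation pointwise, for every value $x_i$ in the support of $p(\rvx_i \mid x_s)$.

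Second, I would apply Gibbs' inequality (equivalently, nonnegativity of the KL divergence): for any distribution $q$ on the label space,
\begin{equation*}
    -\sum_y p(y \mid x_s, x_i) \log q(y) \;=\; H(\rvy \mid x_s, x_i) + \KL\big(p(\rvy \mid x_s, x_i) \,\|\, q\big),
\end{equation*}
which is minimized uniquely by $q = p(\rvy \mid x_s, x_i) = p(\rvy \mid x_s \cup x_i)$. Therefore the pointwise minimizer at input $x_s \cup x_i$ is exactly $f^*(x_s \cup x_i) = p(\rvy \mid x_s \cup x_i)$, and by linearity of expectation this choice simultaneously minimizes the outer expectation.

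Finally, I would observe that the argument shows the claimed policy independence: the pointwise minimizer at any input $x_{s'}$ is $p(\rvy \mid x_{s'})$ and depends only on $x_{s'}$, not on which $x_s$ and $\pi(x_s) = i$ generated it. Thus $f^*(x_{s'}) = p(\rvy \mid x_{s'})$ minimizes \cref{eq:one-step} for every policy $\pi$. I do not anticipate a serious technical obstacle here; the only care needed is to articulate that the minimum is attained by a single function $f^*$ defined on all feature subsets, and to handle measure-theoretic subtleties (a.e.\ uniqueness on the support of the induced distribution over $x_s \cup \rvx_i$) if continuous features are in scope, which I would relegate to a remark since the result is stated at the level of densities.
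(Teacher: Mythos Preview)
Your proposal is correct and follows essentially the same approach as the paper: both reduce the expected one-step-ahead cross-entropy to a pointwise minimization at each predictor input and then invoke the decomposition $-\E_{\rvy}[\log \hat y] = H + \KL(p \,\|\, \hat y)$ (i.e., Gibbs' inequality) to identify $p(\rvy \mid \cdot)$ as the unique minimizer. The paper's version is slightly terser in that it jumps directly to the predictor's input without explicitly writing the tower-property step, but the argument and the policy-independence observation are the same.
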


This property requires that features are selected without knowledge of the remaining features or the response variable, which is a valid assumption for DFS, but not in scenarios where selections are based on the full feature set \citep{chen2018learning, yoon2018invase, jethani2021have}. Now, assuming that we use the Bayes classifier $f^*$ as a predictor, our second result concerns the selection policy. As we show next,
% in the following proposition,
the loss-minimizing policy is equivalent to making selections based on CMI.
% the greedy CMI policy.

\vskip 0.3cm
\begin{prop} \label{prop:cmi}
    When $\rvy$ is discrete, $\ell$ is cross-entropy loss and the predictor is the Bayes classifier $f^*$, \cref{eq:one-step} is minimized by the greedy CMI policy, or $\pi^*(x_s) = \argmax_i \; I(\rvy; \rvx_i \mid x_s)$.
\end{prop}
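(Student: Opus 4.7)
The plan is to substitute the Bayes classifier into the one-step-ahead loss and identify the resulting expression as a conditional entropy, which differs from the CMI by a term that does not depend on the policy's choice $i$.

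Concretely, I would first plug $f = f^*$ into \cref{eq:one-step} and expand the cross-entropy loss. Since $f^*(x_s \cup \rvx_i) = p(\rvy \mid x_s, \rvx_i)$, the one-step-ahead loss for the choice $i$ becomes
\begin{equation*}
    \E_{\rvy, \rvx_i \mid x_s}\bigl[-\log p(\rvy \mid x_s, \rvx_i)\bigr].
\end{equation*}
Next I would apply the tower property, conditioning on $\rvx_i$ first, which produces $\E_{\rvx_i \mid x_s}\bigl[H(\rvy \mid x_s, \rvx_i)\bigr]$; this is exactly the conditional entropy $H(\rvy \mid \rvx_i, x_s)$ when the outer expectation is taken with respect to $p(\rvx_i \mid x_s)$.

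Then I would use the standard identity
\begin{equation*}
    H(\rvy \mid \rvx_i, x_s) = H(\rvy \mid x_s) - I(\rvy; \rvx_i \mid x_s),
\end{equation*}
and observe that $H(\rvy \mid x_s)$ depends only on the already-selected features $x_s$, not on the candidate index $i$. Consequently, the $i$ that minimizes the expected loss is exactly the one that maximizes $I(\rvy; \rvx_i \mid x_s)$, giving the greedy CMI policy $\pi^*(x_s) = \argmax_i I(\rvy; \rvx_i \mid x_s)$.

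There is no serious obstacle here: once \Cref{prop:bayes} lets us fix $f = f^*$, the argument is a short chain of standard information-theoretic identities. The only subtlety worth flagging explicitly is that the outer expectation in \cref{eq:one-step} is taken over $p(\rvx_i, \rvy \mid x_s)$ (i.e., the true conditional joint given the currently observed features), which is precisely what is needed so that the inner log-loss integrates to a conditional entropy rather than to a cross entropy against some mismatched distribution. I would note this explicitly so the reduction to $H(\rvy \mid \rvx_i, x_s)$ is unambiguous, and then state that the regression analogue (e.g., squared loss with the conditional mean as the Bayes-optimal predictor) follows by the same conditional-variance decomposition, deferring details to \Cref{app:proofs}.
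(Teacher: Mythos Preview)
Your proposal is correct and mirrors the paper's own proof essentially step for step: substitute $f^*$, use the tower property to reduce the expected cross-entropy to $\E_{\rvx_i \mid x_s}[H(\rvy \mid \rvx_i, x_s)]$, apply the identity $H(\rvy \mid \rvx_i, x_s) = H(\rvy \mid x_s) - I(\rvy; \rvx_i \mid x_s)$, and note the first term is independent of $i$. The paper likewise appends the squared-error/regression analogue via the conditional-variance decomposition, just as you suggest.
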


With this, we can see that the greedy CMI policy defined in \Cref{sec:greedy} is equivalent to minimizing the one-step-ahead prediction loss. Next, we
% show how to
exploit this variational perspective to develop a joint learning procedure for a policy and predictor network.

\subsection{An amortized optimization approach}

Instead of estimating each feature's CMI to identify the next selection, we now develop an approach that directly predicts the best selection at each step. The greedy policy implicitly requires solving an optimization problem for each selection, or $\argmax_i I(\rvy, \rvx_i; x_s)$, but since we lack access to this objective, we now formulate an approach that directly predicts the solution. Following a technique known as amortized optimization \citep{amos2022tutorial}, we do so by casting our variational perspective on CMI from \Cref{sec:variational} as an objective function to be optimized by a learnable network.

First, because it facilitates gradient-based optimization, we now consider that the policy outputs a \textit{distribution} over feature indices. With slight abuse of notation, this section lets the policy be a function $\pi(x_s) \in \Delta^{d-1}$, which generalizes the previous definition $\pi(x_s) \in [d]$. Using this stochastic version of the policy, we can now formulate our objective function as follows.

Let the selection policy be parameterized by a neural network $\pi(\rvx_s; \phi)$ and the predictor by a neural network $f(\rvx_s; \theta)$. Let $p(\rvs)$ represent a distribution
% over subsets with
with support over all subsets, or
$p(s) > 0$ for all $|s| < d$. Then, our objective function $\mathcal{L}(\theta, \phi)$ is defined as
\begin{equation}
    \mathcal{L}(\theta, \phi) = \E_{p(\rvx, \rvy)} \E_{p(\rvs)} \Big[ \E_{i \sim \pi(\rvx_\rvs; \phi)} \big[ \ell\big(f(\rvx_{\rvs} \cup \rvx_i; \theta) , \rvy \big) \big] \Big]. \label{eq:objective}
\end{equation}
Intuitively, \cref{eq:objective} represents generating a random feature set $\rvx_s$, sampling a feature index according to $i \sim \pi(\rvx_s; \phi)$, and then measuring the loss of the prediction $f(\rvx_s \cup \rvx_i; \theta)$. Our objective thus optimizes for individual selections and predictions rather than the entire trajectory, which lets us build on \Cref{prop:bayes}-\ref{prop:cmi}. We describe this as an implementation of the greedy approach because it recovers the greedy CMI selections when it is trained to optimality. In the classification case, we show the following result under a mild assumption that there is a unique optimal selection.

\vskip 0.3cm
\begin{thm} \label{thm:classification}
    When $\rvy$ is discrete and $\ell$ is cross-entropy loss, the global optimum of \cref{eq:objective} is a predictor that satisfies $f(x_s; \theta^*) = p(\rvy \mid x_s)$ and a policy $\pi(x_s; \phi^*)$ that puts all probability mass on $i^* = \argmax_i I(\rvy; \rvx_i \mid x_s)$.
\end{thm}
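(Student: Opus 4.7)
The strategy is to lower-bound \cref{eq:objective} by applying \Cref{prop:bayes} and \Cref{prop:cmi} as pointwise inequalities, and then to exhibit the claimed $(\theta^*, \phi^*)$ as attaining the bound. Because $p(s) > 0$ for every $|s| < d$, every non-empty subset of $[d]$ arises as some $s \cup \{i\}$ with positive probability, which is exactly what is needed for the bound to pin down $f$ at every relevant subset and to force $\pi$ to concentrate on $i^*$ at every $(s, x_s)$.

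First I would apply Fubini to rewrite \cref{eq:objective} as
\begin{equation*}
\mathcal{L}(\theta, \phi) = \E_{p(\rvs)} \E_{p(\rvx_\rvs)} \E_{i \sim \pi(\rvx_\rvs; \phi)} \E_{p(\rvx_i, \rvy \mid \rvx_\rvs)} \big[\ell(f(\rvx_\rvs \cup \rvx_i; \theta), \rvy)\big],
\end{equation*}
using that the loss depends on $(\rvx, \rvy)$ only through $(\rvx_\rvs, \rvx_i, \rvy)$. For any fixed $(s, x_s, i, x_i)$, the innermost cross-entropy over $\rvy$ is lower-bounded by $H(\rvy \mid x_s, x_i)$, with equality iff $f(x_s \cup x_i; \theta) = p(\rvy \mid x_s, x_i)$; this is the pointwise content of \Cref{prop:bayes}. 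Taking outer expectations yields
\begin{equation*}
\mathcal{L}(\theta, \phi) \geq \E_{p(\rvs)} \E_{p(\rvx_\rvs)} \E_{i \sim \pi(\rvx_\rvs; \phi)} \big[H(\rvy \mid \rvx_i, \rvx_\rvs)\big],
\end{equation*}
with equality whenever $f(\cdot; \theta)$ agrees with the Bayes classifier on every non-empty subset reached with positive probability. Since the support of $p(\rvs)$ covers all $|s| < d$ and $i$ ranges over $[d]$, this forces $f(x_{s'}; \theta^*) = p(\rvy \mid x_{s'})$ on every non-empty $s' \subseteq [d]$.

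Given the lower bound, the remaining sub-problem at each $(s, x_s)$ is to minimise $\E_{i \sim \pi(x_s; \phi)}[H(\rvy \mid \rvx_i, x_s)]$ over distributions in $\Delta^{d-1}$. By \Cref{prop:cmi}, $\argmin_i H(\rvy \mid \rvx_i, x_s) = \argmax_i I(\rvy; \rvx_i \mid x_s) \equiv i^*(x_s)$, and the minimum of a convex combination over the simplex is attained at a point mass on any minimiser; the assumed uniqueness of $i^*$ then makes the minimiser unique. Choosing $\pi(x_s; \phi^*)$ to be the point mass on $i^*(x_s)$ therefore attains the lower bound simultaneously with the Bayes predictor, establishing a global optimum of the claimed form.

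The main obstacle I expect is the joint dependence of the objective on $\theta$ and $\phi$: the per-subset pointwise minima for $f$ and $\pi$ must be realisable by a single parameter pair. I would resolve this by an expressivity assumption --- that the predictor and policy classes can represent the Bayes classifier and a deterministic point-mass policy, respectively, on every relevant subset --- so the per-subset minima are simultaneously attainable and the lower bound is tight. A secondary point is that $f$ is formally unconstrained on subsets never reached (e.g.\ $\varnothing$), but extending it to the Bayes classifier everywhere preserves optimality and matches the stated form of $\theta^*$.
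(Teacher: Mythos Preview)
Your proposal is correct and follows essentially the same route as the paper's proof: both first show that for any policy the inner cross-entropy is minimized pointwise by the Bayes classifier (reducing the objective to an expected conditional entropy depending only on $\phi$), and then minimize over $\pi$ via the convex-combination argument to obtain the point mass on $i^*$. Your lower-bound-plus-attainment framing, explicit mention of the expressivity assumption, and remark about subsets never reached (e.g.\ $\varnothing$) are slight elaborations beyond the paper's more informal two-step optimization, but the substance is the same.
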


If we relax the assumption of a unique optimal selection, the optimal policy $\pi(\rvx_s; \phi^*)$ simply splits probability mass among the best indices. A similar result holds in the regression case, where we can interpret the greedy policy as performing conditional variance minimization.

\vskip 0.3cm
\begin{thm} \label{thm:regression}
    When $\rvy$ is continuous and $\ell$ is squared error loss, the global optimum of \cref{eq:objective} is a predictor that satisfies $f(x_s; \theta^*) = \E[\rvy \mid x_s]$ and a policy $\pi(x_s; \phi^*)$ that puts all probability mass on $i^* = \argmin_i \E_{\rvx_i \mid x_s}[\Var(\rvy \mid \rvx_i, x_s)]$.
\end{thm}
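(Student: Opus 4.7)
The plan is to mirror the proof strategy used for \Cref{thm:classification}, replacing the classification-specific identities (Bayes classifier, CMI) with their $L^2$ analogues (conditional mean, conditional variance). I would first establish regression versions of \Cref{prop:bayes} and \Cref{prop:cmi}, and then combine them exactly as in the discrete case to pin down the joint optimum of \cref{eq:objective}.

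The first step is to show that for any fixed policy $\pi$, the predictor minimizing \cref{eq:objective} pointwise is the conditional expectation $f^*(x_s \cup x_i;\theta^*) = \E[\rvy \mid x_s, x_i]$. This is the standard $L^2$ projection fact: for any integrable random variable, the minimizer of mean squared error against it, among measurable functions of a conditioning $\sigma$-algebra, is the conditional expectation. Because the outer expectations in \cref{eq:objective} are over $p(\rvx,\rvy)$ and $p(\rvs)$ with $p(s)>0$ for all $|s|<d$, and because $\pi(x_s;\phi)$ depends only on $x_s$, the argument $x_s \cup x_i$ inside $f$ can be treated as the conditioning information and the minimizer is determined independently of $\pi$. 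This is the regression analogue of \Cref{prop:bayes}.

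The second step is to substitute $f^*$ into the inner expectation and simplify. For each $(x_s, i)$, the inner expectation becomes
\begin{equation*}
\E_{\rvy,\rvx_i \mid x_s}\bigl[(\E[\rvy \mid x_s, \rvx_i] - \rvy)^2\bigr] = \E_{\rvx_i \mid x_s}\bigl[\Var(\rvy \mid x_s, \rvx_i)\bigr],
\end{equation*}
by the tower property. Thus, conditional on $x_s$, minimizing \cref{eq:objective} over the (stochastic) policy $\pi(x_s;\phi) \in \Delta^{d-1}$ reduces to minimizing a linear functional $\sum_i \pi_i(x_s;\phi) \, g_i(x_s)$, where $g_i(x_s) \equiv \E_{\rvx_i \mid x_s}[\Var(\rvy \mid x_s, \rvx_i)]$. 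Under the uniqueness assumption, this linear program on the simplex is minimized exactly by the Dirac mass on $i^*(x_s) = \argmin_i g_i(x_s)$, yielding the claimed form of $\pi(x_s;\phi^*)$.

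The final step is to combine both conclusions to identify the joint global optimum. Since the optimal predictor does not depend on $\pi$, and the optimal policy given $f^*$ is the one derived above, the pair $(\theta^*,\phi^*)$ simultaneously minimizes \cref{eq:objective}; any other configuration can be strictly improved by first replacing $f$ with $f^*$ and then concentrating $\pi$ on $i^*$. The main subtlety, rather than any heavy calculation, is ensuring that the decoupling in step one is legitimate: the predictor is allowed to depend on the full $x_s \cup x_i$, and because $\pi$ samples $i$ based only on $x_s$, the tower property applies cleanly and the optimal $f$ can be specified without reference to $\pi$. The rest reduces to the standard bias-variance decomposition and a trivial optimization over the simplex, so I expect no further obstacles.
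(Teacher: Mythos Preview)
Your proposal is correct and follows essentially the same approach as the paper's own proof: first fix the optimal predictor as the conditional mean independently of $\pi$, then reduce the policy optimization to a linear program over the simplex whose minimizer is the Dirac mass on $i^* = \argmin_i \E_{\rvx_i \mid x_s}[\Var(\rvy \mid \rvx_i, x_s)]$. The paper's proof is terser (it simply points back to the logic of \Cref{thm:classification} and writes out the two key identities), but the structure and ingredients are identical.
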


Proofs for these results are in \Cref{app:proofs}.
We note that the function class for each model must be expressive enough to contain their respective optima, and that the result holds for any $p(\rvs)$ with support over all subsets.

This approach has two key advantages over the CMI estimation procedure from \Cref{sec:iterative}. First, we avoid modeling the feature conditional distributions $p(\rvx_i \mid \rvx_s)$ for all $(s, i)$. Modeling these distributions is a difficult intermediate step, and our approach instead aims to directly output the optimal index. Second, our approach is faster because each selection is made in a single forward pass: selecting $k$ features using the procedure from \citet{ma2019eddi} requires $\mathcal{O}(dk)$ scoring steps, but our approach requires only $k$ forward passes through the policy network $\pi(\rvx_s; \phi)$.

Furthermore, compared to a policy trained by RL, the greedy approach is easier to learn. Our training procedure can be viewed as a form of reward shaping \citep{sutton1998introduction, randlov1998learning}, where the reward accounts for the loss after each step and provides a strong signal about whether each selection is helpful. In comparison, observing the reward only after selecting $k$ features provides a comparably weak signal to the policy network (see \cref{eq:scoring}). RL methods generally face a challenging exploration-exploitation trade-off, but learning the greedy policy is simpler because it only requires finding the locally optimal choice
% among $\mathcal{O}(d)$ options
at each step.

\begin{figure*}[t]
    \centering
    % \vskip -0.5cm
    \includegraphics[width=0.9\linewidth, page=11, trim={2.2cm 8.5cm 8.2cm 4.5cm}, clip]{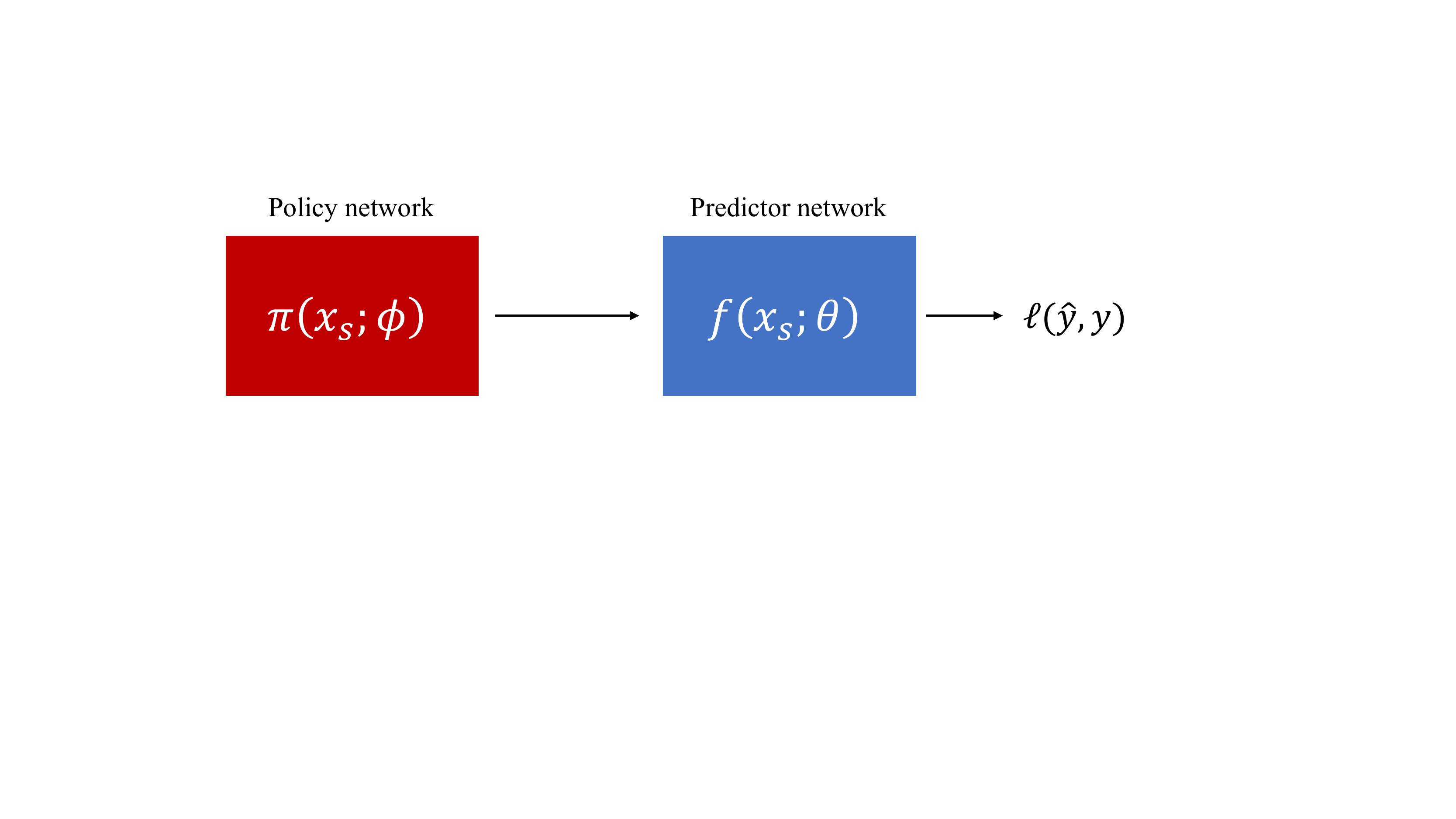}
    \vskip -0.2cm
    \caption{Diagram of our training approach. Left: features are selected by making repeated calls to the policy network using masked inputs. Right: predictions are made after each selection using the predictor network. Only solid lines are backpropagated through when performing gradient descent.} \label{fig:method}
\end{figure*}

\subsection{Training with a continuous relaxation} \label{sec:continuous}

Our objective in \cref{eq:objective} yields the correct greedy policy when it is perfectly optimized, but $\mathcal{L}(\theta, \phi)$ is difficult to optimize by gradient descent. In particular, gradients are difficult to propagate through the policy network given a sampled index $i \sim \pi(\rvx_s; \phi)$. The \textsc{Reinforce} trick \citep{williams1992simple} is one way to get stochastic gradients, but high gradient variance can make it ineffective in many problems. There is a robust literature on reducing gradient variance
% for this approach
in this setting
\citep{tucker2017rebar, grathwohl2018backpropagation},
% \citep{mnih2014neural, tucker2017rebar, grathwohl2018backpropagation},
but we propose using a simple alternative: the Concrete distribution \citep{maddison2016concrete}.

An index sampled according to $i \sim \pi(x_s; \phi)$ can be represented by a one-hot vector $m \in \{0, 1\}^d$ indicating the chosen index, and with the Concrete distribution we instead sample an \textit{approximately} one-hot vector in the probability simplex, or $m \in \Delta^{d-1}$. This continuous relaxation lets us calculate gradients using the reparameterization trick \citep{maddison2016concrete, jang2016categorical}. Relaxing the subset $s \subseteq [d]$ to a continuous vector also requires relaxing the policy and predictor functions, so we let these operate on a masked input $x$, or the element-wise product $x \odot m$. To avoid ambiguity about whether features are zero or masked, we can also pass the mask as a model input.

Training with the Concrete distribution requires specifying a temperature parameter $\tau > 0$ to control how discrete the samples are. Previous works have typically trained with a fixed temperature or annealed it over a pre-determined number of epochs \citep{chang2017dropout, chen2018learning, balin2019concrete}, but we instead train with a sequence of $\tau$ values and perform early stopping at each step. This removes the temperature and number of epochs as important hyperparameters to tune. Our training procedure is summarized in \Cref{fig:method}, and in more detail by \Cref{alg:method}.

There are also several optional steps that we found can improve optimization:

\begin{itemize}[leftmargin=0.5cm]
    \item Parameters can be shared between the predictor and policy networks $f(\rvx; \theta), \pi(\rvx, \phi)$. This does not complicate their joint optimization, and learning a shared representation in the early layers can in some cases help the networks optimize faster (e.g., for image data).

    \item Rather than training with a random subset distribution $p(\rvs)$, we generate subsets using features selected by the current policy $\pi(\rvx; \phi)$. This allows the models to focus on subsets likely to be encountered at inference time, and it does not affect the globally optimal policy/predictor: gradients are not propagated between selections, so both \cref{eq:objective} and this sampling approach treat each feature set as an independent optimization problem, only with different weights (see \Cref{app:training}).

    \item We pre-train the predictor $f(\rvx; \theta)$ using random subsets before jointly training the policy-predictor pair. This works better than optimizing $\mathcal{L}(\theta, \phi)$ from a random initialization, because a random predictor $f(\rvx; \theta)$ provides no signal to $\pi(\rvx; \phi)$ about which features are useful.
\end{itemize}

% Related work
\section{Related work} \label{sec:related}
Prior work has frequently addressed DFS using RL. For example, \citet{dulac2011datum, shim2018joint, janisch2019classification, li2021active} optimize a reward based on the final prediction accuracy, and \citet{kachuee2018opportunistic} use a reward that accounts for prediction uncertainty. RL is a natural approach for sequential decision-making problems, but it can be difficult to optimize in practice: RL requires complex training routines, is slow to converge, and is highly sensitive to its initialization \citep{henderson2018deep}. As a result, RL-based DFS does not reliably outperform static feature selection, as shown by \citet{erion2021coai} and confirmed in our experiments.

% \begin{figure}[t]
%     \centering
%     \vskip -0.5cm
%     \includegraphics[width=1.0\linewidth]{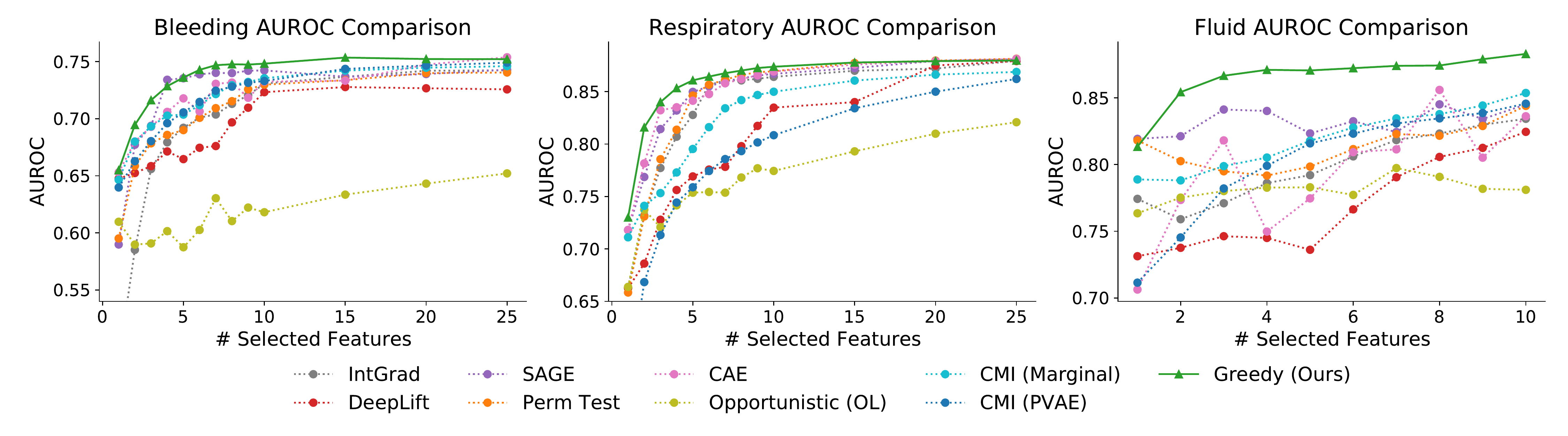}
%     \vskip -0.2cm
%     \caption{Evaluating the greedy approach on three medical diagnosis tasks.} \label{fig:medical}
% \end{figure}

\begin{figure*}[ht!]
    \centering
    % \vskip -0.5cm
    \includegraphics[trim={0cm 2.2cm 0cm 0cm}, clip, width=0.9\linewidth]{images/medical.pdf}
    \includegraphics[width=0.9\linewidth]{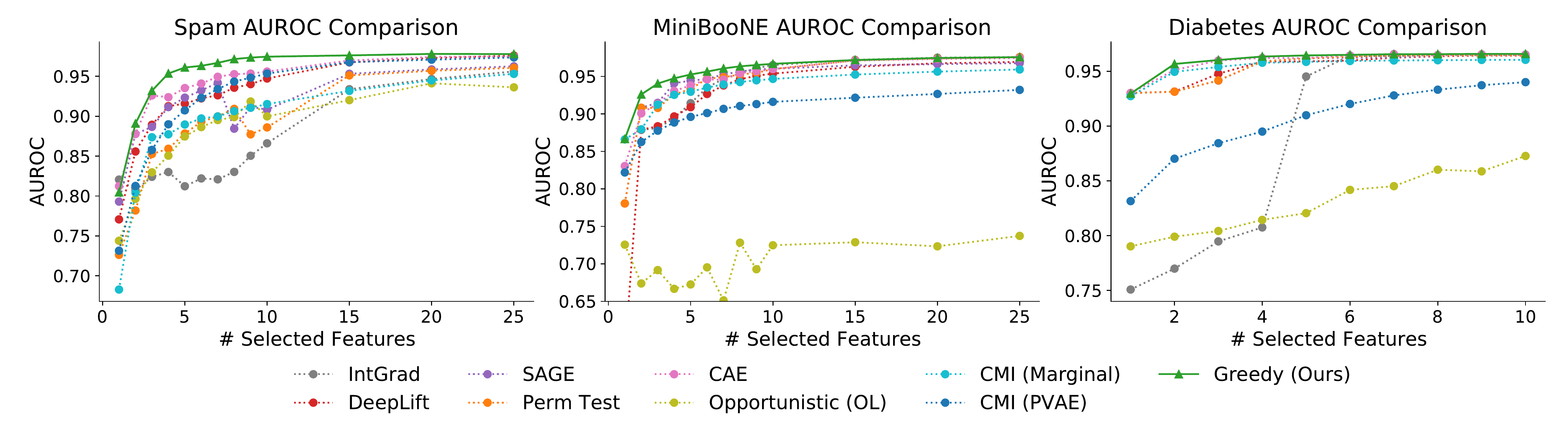}
    \vskip -0.2cm
    \caption{Evaluating the greedy approach on six tabular datasets. The results for each method are the average across five runs.} \label{fig:medical}
\end{figure*}

Several other approaches include imitation learning \citep{he2012cost, he2016active} and iterative feature scoring methods \citep{melville2004active,saar2009active, chen2015value, early2016dynamic, early2016test}. Imitation learning casts DFS as supervised classification, whereas our training approach bypasses the need for an oracle policy. Most existing feature scoring techniques are greedy methods, like ours, but they use scoring heuristics that are unrelated to maximizing CMI (see \Cref{sec:iterative}). Two feature scoring methods are specifically designed to calculate the CMI, but they suffer from important practical limitations: both \citet{ma2019eddi} and \citet{chattopadhyay2022interpretable} rely on difficult-to-train generative models, which can lead to inaccurate CMI estimation. Our approach is simpler, faster and more flexible, because the selection logic is contained within a policy network that avoids the need for generative modeling.\footnote{Concurrently, \citet{chattopadhyay2023variational} proposed a similar approach to predict the optimal selection at each step.}

Static feature selection is a long-standing problem \citep{guyon2003introduction, cai2018feature}. There are no default approaches for neural networks, but one option is ranking features by local or global importance scores \citep{breiman2001random, shrikumar2017learning, sundararajan2017axiomatic, covert2020understanding}. In addition, several prior works have leveraged continuous relaxations to learn feature selection strategies by gradient descent: for example, \citet{chang2017dropout, balin2019concrete, yamada2020feature, lee2021self, covert2022predictive} perform static feature selection, and \citet{chen2018learning, jethani2021have} perform instance-wise feature selection given access to all the features. Our work uses a similar continuous relaxation for optimization, but in the DFS context, where our method learns a selection policy rather than a static selection layer.

Finally, several works have examined greedy feature selection algorithms from a theoretical perspective. For example, \citet{das2011submodular, elenberg2018restricted} show that weak submodularity implies near-optimal performance for static feature selection. More relevant to our work, \citet{chen2015sequential} find that the related notion of adaptive submodularity \citep{golovin2011adaptive} does not not hold in the DFS setting,
% to DFS when evaluated via mutual information,
but the authors
% manage to
provide performance guarantees under specific distributional assumptions.

% Experiments
\section{Experiments} \label{sec:experiments}
We now demonstrate the use of our greedy approach
% (GDFS)
on several datasets.
% \footnote{\scriptsize{\url{https://github.com/iancovert/dynamic-selection}}}
We first explore tabular datasets of various sizes, including four medical diagnosis tasks, and we then consider two image classification datasets. Several of the tasks are natural candidates for DFS, and the remaining ones serve as useful tasks to test the effectiveness of our approach.
% Code for reproducing our experiments will be released upon publication.
Code for reproducing our experiments is available online:
% \footnote{\url{https://github.com/iancovert/dynamic-selection}}
{\small \url{https://github.com/iancovert/dynamic-selection}}.

\begin{table*}[ht]
\centering
\caption{AUROC averaged across budgets of 1-10 features (with 95\% confidence intervals).} \label{tab:auroc}
\begin{center}
\vskip 0.1cm
\begin{scriptsize}
\begin{tabular}{clcccccc}
\toprule
{} & {} &  Spam &  MiniBooNE &  Diabetes &  Bleeding &  Respiratory &  Fluid \\
\midrule
\multirow{5}{*}{\rotatebox[origin=c]{90}{Static}} & IntGrad            &  82.84 $\pm$ 0.68 &  89.10 $\pm$ 0.33 &  88.91 $\pm$ 0.24 &  66.70 $\pm$ 0.27 &  81.10 $\pm$ 0.04 &  79.94 $\pm$ 0.94 \\
{} & DeepLift           &  90.16 $\pm$ 1.24 &  88.62 $\pm$ 0.30 &  95.42 $\pm$ 0.13 &  67.75 $\pm$ 0.49 &  76.05 $\pm$ 0.35 &  76.96 $\pm$ 0.56 \\
{} & SAGE               &  89.70 $\pm$ 1.10 &  92.64 $\pm$ 0.03 &  95.43 $\pm$ 0.01 &  71.34 $\pm$ 0.19 &  82.92 $\pm$ 0.26 &  83.27 $\pm$ 0.53 \\
{} & Perm Test          &  85.64 $\pm$ 3.58 &  92.19 $\pm$ 0.15 &  95.46 $\pm$ 0.02 &  68.89 $\pm$ 1.06 &  81.56 $\pm$ 0.28 &  81.35 $\pm$ 1.04 \\
{} & CAE                &  92.28 $\pm$ 0.27 &  92.76 $\pm$ 0.41 &  95.91 $\pm$ 0.07 &  70.69 $\pm$ 0.57 &  83.10 $\pm$ 0.45 &  79.40 $\pm$ 0.86 \\
\midrule
\multirow{4}{*}{\rotatebox[origin=c]{90}{Dynamic}} & Opportunistic (OL) &  85.94 $\pm$ 0.20 &  69.23 $\pm$ 0.64 &  83.07 $\pm$ 0.82 &  60.63 $\pm$ 0.55 &  74.44 $\pm$ 0.42 &  78.13 $\pm$ 0.31 \\
{} & CMI (Marginal)     &  86.57 $\pm$ 1.54 &  92.21 $\pm$ 0.40 &  95.48 $\pm$ 0.05 &  70.57 $\pm$ 0.46 &  79.62 $\pm$ 0.62 &  81.97 $\pm$ 0.93 \\
{} & CMI (PVAE)         &  89.01 $\pm$ 1.40 &  88.94 $\pm$ 1.25 &  90.50 $\pm$ 5.16 &  70.17 $\pm$ 0.74 &  74.12 $\pm$ 3.50 &  80.27 $\pm$ 1.02 \\
{} & Greedy (Ours)      &  \textbf{93.91 $\pm$ 0.17} &  \textbf{94.46 $\pm$ 0.12} &  \textbf{96.03 $\pm$ 0.02} &  \textbf{72.64 $\pm$ 0.31} &  \textbf{84.48 $\pm$ 0.08} &  \textbf{86.59 $\pm$ 0.25} \\
\bottomrule
\end{tabular}
\end{scriptsize}
\end{center}
\end{table*}

% When evaluating our method, we compare
We evaluate our method by comparing to both dynamic and static feature selection methods. We also ensure consistent comparisons by only using methods applicable to neural networks.
% only use methods applicable to neural networks to ensure consistent comparisons.
As static baselines, we use permutation tests \citep{breiman2001random} and SAGE \citep{covert2020understanding} to rank features by their importance to the model's
% model
accuracy, as well as per-prediction DeepLift \citep{shrikumar2017learning} and IntGrad \citep{sundararajan2017axiomatic} scores aggregated across the dataset. We then use a supervised version of the Concrete Autoencoder (CAE, \citealt{balin2019concrete}), a state-of-the-art static feature selection method. As dynamic baselines, we use two versions of the CMI estimation procedure described in \Cref{sec:iterative}. First, we use the PVAE generative model from \citet{ma2019eddi} to sample unknown features, and second, we instead sample unknown features from their marginal distribution; in both cases, we use a classifier trained with random feature subsets to make predictions. Finally, we also use the RL-based Opportunistic Learning (OL) approach \citep{kachuee2018opportunistic}. \Cref{app:baselines} provides more information about the baseline methods.
% each of the baselines.
% and
% minor modifications that were required.
% our hyperparameter choices.

\subsection{Tabular datasets}

We first applied our method to three medical diagnosis tasks derived from an emergency medicine setting. The tasks involve predicting a patient's bleeding risk via a low fibrinogen concentration
% an abnormally low concentration of the clotting protein fibrinogen
(bleeding), whether the patient requires endotracheal intubation for respiratory support (respiratory), and whether the patient will be responsive to fluid resuscitation
% according to definition set forth by the American College of Surgeons
(fluid). See \Cref{app:datasets} for more details about the datasets. In each scenario, gathering all possible inputs at inference time is challenging due to time and resource constraints, thus making DFS a natural solution.

We use fully connected networks for all methods, and we use dropout to reduce overfitting \citep{srivastava2014dropout}. \Cref{fig:medical} (top) shows the results of applying each method with various feature budgets. The classification accuracy is measured via AUROC, and the greedy method achieves the best results for
nearly
all feature budgets on all three tasks. Among the baselines, several static methods are sometimes close, but the CMI estimation method is rarely competitive \citep{ma2019eddi}. 
% and the CMI estimation method is competitive on the bleeding task.
Additionally, OL provides unstable and weak results.
% and the CMI estimation method is only competitive on the fluid task.
The greedy method's advantage is often largest when selecting a small number of features, and it usually becomes narrower once the accuracy saturates.

Next, we conducted experiments using three publicly available tabular datasets: spam classification \citep{dua2017uci}, particle identification (MiniBooNE) \citep{roc2005boosted} and diabetes diagnosis \citep{miller1973plan}. The diabetes task is a natural application for DFS and was used in prior work \citep{kachuee2018opportunistic}. We again
% applied each method with
tested various numbers of features, and \Cref{fig:medical} (bottom)
% \Cref{fig:tabular-large}
% \Cref{app:results}
shows plots of the AUROC for each feature budget. On these tasks,
% here,
the greedy method is once again most accurate for nearly all numbers of features. \Cref{tab:auroc} summarizes the results via the mean AUROC across $k = 1, \ldots, 10$ features, further emphasizing the benefits of the greedy method across all six datasets. \Cref{app:results} shows larger versions of the AUROC curves (\Cref{fig:medical-large} and \Cref{fig:tabular-large}),
% \Cref{fig:medical},
as well as plots demonstrating the variability of selections within each dataset.

The results with these datasets reveal that, perhaps surprisingly, dynamic methods can
% often
be outperformed by static methods. Interestingly, this point was not highlighted in prior works where strong static baselines were not tested \citep{kachuee2018opportunistic, janisch2019classification}. For example, OL is not competitive on these datasets, and the two versions of the CMI estimation approach are not consistently among the top baselines.
% \textcolor{red}{and the CMI estimation method outperforms all the static selection methods on just three out of six datasets.}
Dynamic methods are in principle capable of performing better, so the sub-par results from these methods underscore the difficulty of learning both a selection policy and a prediction function that works for multiple feature sets. In these experiments, our approach is the only dynamic method to do both successfully.

\begin{figure*}[t]
    \centering
    % \vskip -0.5cm
    \includegraphics[width=0.9\linewidth]{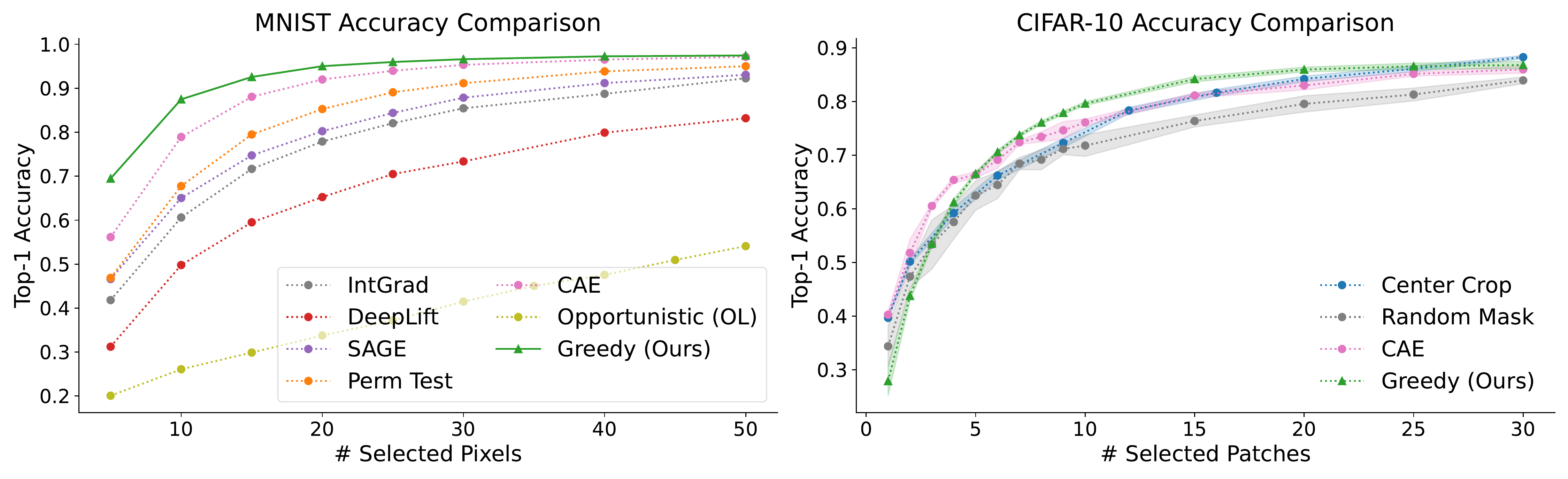}
    \includegraphics[width=0.9\linewidth]{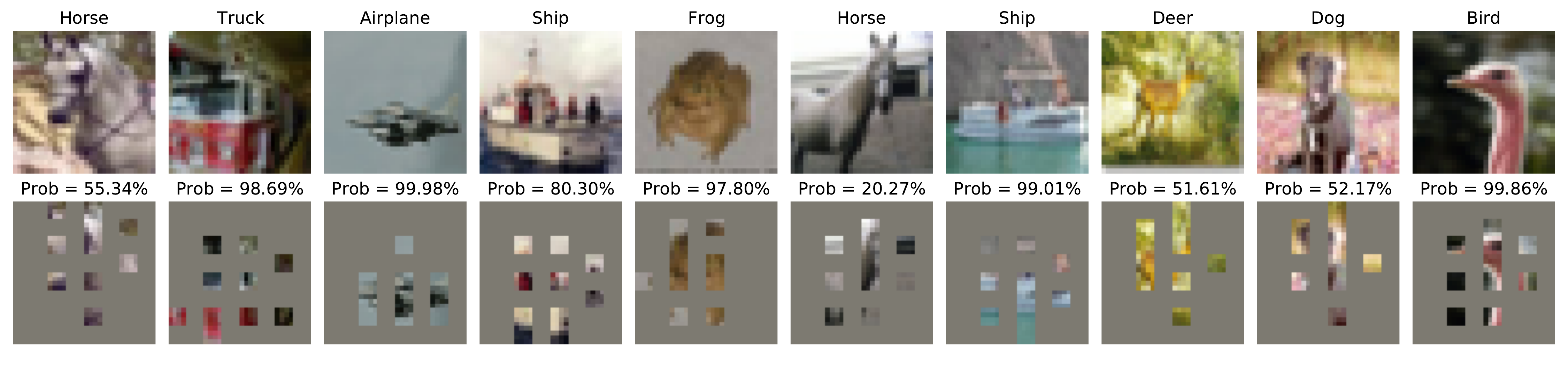}
    % \vskip -0.4cm
    \caption{Greedy feature selection for image classification. Top left: accuracy comparison on MNIST with results averaged across five runs. Top right: accuracy comparison on CIFAR-10 with 95\% confidence intervals. Bottom: example selections and predictions for the greedy method with 10 out of 64 patches for CIFAR-10 images.} \label{fig:images}
    % \vskip -0.2cm
\end{figure*}

\subsection{Image classification datasets}

Next, we considered two standard image classification datasets: MNIST \citep{lecun1998gradient} and CIFAR-10 \citep{krizhevsky2009learning}. Our goal is to begin with a blank image, sequentially reveal multiple pixels or patches, and ultimately make a classification using a small portion of the image. Although this is not an obvious use case for DFS, it represents a challenging problem for our method, and similar tasks were considered in several earlier works \citep{karayev2012timely, mnih2014recurrent, early2016test, janisch2019classification}.

For MNIST, we use fully connected architectures for both the policy and predictor,
% because it reaches sufficient accuracy,
and we treat pixels as individual features; we therefore have $d = 784$.
% (where $d = 784$).
For CIFAR-10, we use a shared ResNet backbone \citep{he2016deep} for the policy and predictor networks, and each network uses its own output head. The $32 \times 32$ images are coarsened into $d = 64$ patches of size $4 \times 4$, so the selector head generates
% a grid of logits
logits corresponding to each patch, and the predictor head generates probabilities for each class.

\Cref{fig:images} shows our method's accuracy for different feature budgets.
For MNIST, we use the previous baselines but exclude the CMI estimation method due to its computational cost: it becomes slow when evaluating many candidate features. We observe a large benefit for our method,
% performance gap,
particularly when making a small number of selections. Our greedy method reaches nearly 90\% accuracy with just 10 pixels, which is roughly 10\% higher than the best baseline and considerably higher than prior work \citep{balin2019concrete, yamada2020feature, covert2020understanding}. OL yields the worst results, and it also trains slowly due to the large number of states.

For CIFAR-10, we omit several baseline comparisons due to their computational cost. We use the CAE, which is our most competitive static baseline, as well as two simple baselines: center crops and random masks of various sizes.
% For the random baseline, we plot the mean and standard deviation across five random masks.
For each method, we plot the mean and 95\% confidence intervals determined from five trials.
Our greedy approach is slightly less accurate with a very small number of patches, but it reaches significantly higher accuracy when using 6-20 patches. Finally, \Cref{fig:images} (bottom) also shows qualitative examples of our method's predictions after selecting 10 out of 64 patches, and \Cref{app:results} shows similar plots with different numbers of patches.

% Conclusion
\section{Conclusion} \label{sec:conclusion}
In this work, we explored a greedy algorithm for dynamic feature selection (DFS) that selects features based on their CMI with the response variable. We proposed an approach to approximate this policy by directly predicting the optimal selection at each step, and we conducted experiments that show our method outperforms a variety of existing feature selection methods, including both dynamic and static baselines.
Future work on this topic may include incorporating non-uniform features costs or determining the ideal feature budget on a per-sample basis; from a theoretical perspective, characterizing the greedy algorithm's performance outside of our fixed-budget case is another interesting topic for future work \cite{chen2015sequential}.
% Future work on this topic may include incorporating non-uniform features costs, determining the feature budget on a per-sample basis, and further characterizing the greedy suboptimality gap: some progress has been made in analyzing the greedy algorithm's suboptimality in the dynamic setting \citep{chen2015sequential}, but more general characterizations remain an open topic for future work.
Finally, future work may also explore architectures that are well-suited to processing partial inputs, particularly for structured data like images.

% Acknowledgments
\section*{Acknowledgements}
We thank Samuel Ainsworth, Kevin Jamieson, Mukund Sudarshan and the Lee Lab for helpful discussions. This work was funded by NSF DBI-1552309 and DBI-1759487, NIH R35-GM-128638 and R01-NIA-AG-061132.

% Bibliography
\bibliography{main}
\bibliographystyle{icml2023}

% Appendix
\newpage
\appendix
\onecolumn
\clearpage
\section{Proofs} \label{app:proofs}

In this section, we re-state and prove our main theoretical results. We begin with our proposition regarding the optimal predictor for an arbitrary policy $\pi$.

\vskip 0.3cm
\begin{appprop}
    When $\rvy$ is discrete and $\ell$ is cross-entropy loss, \cref{eq:one-step} is minimized for any policy $\pi$ by the Bayes classifier, or $f^*(x_s) = p(\rvy \mid x_s)$.
\end{appprop}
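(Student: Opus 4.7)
The plan is to reduce the statement to the standard fact that cross-entropy between a fixed distribution $q$ and a candidate distribution $\hat{q}$ over the same variable is minimized when $\hat{q} = q$ (a consequence of Gibbs' inequality / nonnegativity of KL). The only real work is bookkeeping with the conditioning, together with a pointwise-minimization argument that lets us handle \emph{any} policy $\pi$ at once.

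First I would write out the objective explicitly. Given $x_s$ and a (possibly random) choice $i = \pi(x_s)$, the expected one-step-ahead loss in \cref{eq:one-step} is
\begin{equation*}
\E_{\rvx_i, \rvy \mid x_s}\!\big[-\log f(x_s \cup \rvx_i)_{\rvy}\big],
\end{equation*}
where $f(x_s \cup x_i)_y$ denotes the $y$-th coordinate of the predicted probability vector. By the tower property I would condition on $\rvx_i$ inside:
\begin{equation*}
\E_{\rvx_i \mid x_s}\!\Big[\, \E_{\rvy \mid x_s, \rvx_i}\!\big[-\log f(x_s \cup \rvx_i)_{\rvy}\big]\,\Big].
\end{equation*}
The inner expectation, for each fixed realization of $(x_s, x_i)$, is the cross-entropy of the true conditional law $p(\rvy \mid x_s, x_i)$ against the candidate distribution $f(x_s \cup x_i) \in \Delta^{K-1}$.

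Next I would invoke the pointwise minimizer: for any target distribution $q$ on a finite set and any $\hat{q} \in \Delta^{K-1}$,
\begin{equation*}
-\sum_y q(y) \log \hat{q}(y) = H(q) + \KL(q \,\|\, \hat{q}) \;\ge\; H(q),
\end{equation*}
with equality iff $\hat{q} = q$. Applied with $q = p(\rvy \mid x_s, x_i)$ and $\hat q = f(x_s \cup x_i)$, this shows the inner expectation is minimized, pointwise in $(x_s, x_i)$, by choosing $f(x_s \cup x_i) = p(\rvy \mid x_s, x_i)$. Because the minimization is pointwise, the outer expectation over $\rvx_i$ (induced by $\pi$, whatever it is) is simultaneously minimized by the same choice, i.e.\ by the Bayes classifier $f^*(x_{s'}) = p(\rvy \mid x_{s'})$ for $s' = s \cup \{i\}$. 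This gives the conclusion uniformly over the policy $\pi$.

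There is no serious obstacle here, so the ``hardest'' part is essentially cosmetic: being explicit that $f$ is a function of the feature subset only (so the choice of $f$ at the argument $x_s \cup x_i$ can be made independently for each realized $(x_s, x_i)$), and being clear that since the minimization holds at each realization of $\rvx_i$, the argument goes through for any distribution over $i$ produced by $\pi(x_s)$ — including deterministic ones. I would conclude by noting that the argument depends only on the law $p(\rvy \mid x_s, \rvx_i)$ and not on how $i$ was selected, which is exactly the ``for any policy $\pi$'' clause in the statement, and that the same argument will be reused in the proof of \Cref{prop:cmi} once the predictor is fixed to $f^*$.
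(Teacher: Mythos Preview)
Your proposal is correct and follows essentially the same approach as the paper: both argue pointwise over the predictor's input, decompose the expected cross-entropy as $H(q) + \KL(q \,\|\, \hat q)$, and conclude via nonnegativity of KL that the Bayes classifier is optimal regardless of $\pi$. The only cosmetic difference is that you make the tower property over $\rvx_i$ explicit, whereas the paper directly considers a generic predictor input and notes that, since $\pi$ sees only $x_s$, the relevant conditional is simply $p(\rvy \mid x_s)$.
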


\begin{proof}
    Given the predictor inputs $x_s$, our goal is to determine the prediction that minimizes the expected loss. Because features are selected sequentially by $\pi$ with no knowledge of the non-selected values, there is no other information to condition on; for the predictor, we do not even need to distinguish the order in which features were selected. We can therefore derive the optimal prediction $\hat y \in \Delta^{K -  1}$ for a discrete response $\rvy \in [K]$ as follows:
    
    \begin{align*}
        f^*(x_s) &= \argmin_{\hat y} \; \E_{\rvy \mid x_s} \big[ \ell(\hat y, \rvy) \big] \\
        &= \argmin_{\hat y} \; \sum_{i \in \gY} p(\rvy = i \mid x_s) \log \hat{y}_i \\
        &= \argmin_{\hat y} \; \KL\big( p(\rvy \mid x_s) \mid\mid \hat y \big) + H(\rvy \mid x_s) \\
        &= p(\rvy \mid x_s).
    \end{align*}
    
    In the case of a continuous response $\rvy \in \R$ with squared error loss, we have a similar result involving the response's conditional expectation:

    \begin{align*}
        f^*(x_s) &= \argmin_{\hat y} \; \E_{\rvy \mid x_s} \big[ (\hat y - \rvy)^2 \big] \\
        &= \argmin_{\hat y} \; \E_{\rvy \mid x_s} \big[ (\hat y - \E[\rvy \mid x_s])^2 \big] + \Var(\rvy \mid x_s) \\
        &= \E[\rvy \mid x_s].
    \end{align*}
\end{proof}

\begin{appprop}
    When $\rvy$ is discrete, $\ell$ is cross-entropy loss and the predictor is the Bayes classifier $f^*$, \cref{eq:one-step} is minimized by the greedy CMI policy, or $\pi^*(x_s) = \argmax_i \; I(\rvy; \rvx_i \mid x_s)$.
\end{appprop}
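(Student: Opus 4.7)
The plan is to substitute the Bayes classifier into the one-step-ahead loss, recognize the resulting expectation as a conditional entropy, and then invoke the standard identity relating conditional entropy to conditional mutual information. The reduction is short because Proposition 1 already pins down the predictor for any subset-valued input; in particular, since $x_s \cup \rvx_i$ is again just a feature subset, the Bayes-optimal prediction after the new feature is revealed is $f^*(x_s \cup \rvx_i) = p(\rvy \mid x_s, \rvx_i)$.

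First, I would expand the inner expectation in \cref{eq:one-step} using the definition of cross-entropy loss with $f = f^*$:
\begin{equation*}
\E_{\rvy, \rvx_i \mid x_s}\bigl[\ell(f^*(x_s \cup \rvx_i), \rvy)\bigr] = \E_{\rvx_i \mid x_s}\Bigl[\,\E_{\rvy \mid x_s, \rvx_i}\bigl[-\log p(\rvy \mid x_s, \rvx_i)\bigr]\Bigr].
\end{equation*}
The inner expectation is exactly the conditional entropy $H(\rvy \mid x_s, \rvx_i)$ (a quantity depending on the realized value of $\rvx_i$), and averaging over $\rvx_i \mid x_s$ yields the conditional entropy $H(\rvy \mid \rvx_i, x_s)$ (now in the information-theoretic sense, averaging over $\rvx_i$ given $x_s$).

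Next, I would apply the identity $I(\rvy; \rvx_i \mid x_s) = H(\rvy \mid x_s) - H(\rvy \mid \rvx_i, x_s)$. Since $H(\rvy \mid x_s)$ depends only on the fixed $x_s$ and not on the choice of $i$, minimizing $H(\rvy \mid \rvx_i, x_s)$ over $i$ is equivalent to maximizing $I(\rvy; \rvx_i \mid x_s)$ over $i$. Therefore
\begin{equation*}
\pi^*(x_s) = \argmin_i \E_{\rvy, \rvx_i \mid x_s}\bigl[\ell(f^*(x_s \cup \rvx_i), \rvy)\bigr] = \argmax_i I(\rvy; \rvx_i \mid x_s),
\end{equation*}
which is exactly the greedy CMI policy.

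There is no real obstacle here, since the result follows from a direct calculation once Proposition 1 is in hand. The only point worth stating carefully is the double use of Proposition 1: it ensures both that the predictor evaluated on $x_s \cup \rvx_i$ is $p(\rvy \mid x_s, \rvx_i)$ and that this substitution is justified uniformly over the (random) choice of $i$ made by $\pi$. For completeness I would also note the analogous regression case: with squared error loss and $f^*(x_s \cup \rvx_i) = \E[\rvy \mid x_s, \rvx_i]$, the same manipulation produces $\E_{\rvx_i \mid x_s}[\Var(\rvy \mid \rvx_i, x_s)]$, so the loss-minimizing policy selects the feature that most reduces conditional variance, consistent with the regression statement in Theorem 2.
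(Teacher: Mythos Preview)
Your proposal is correct and follows essentially the same route as the paper: substitute $f^*(x_s \cup \rvx_i) = p(\rvy \mid x_s, \rvx_i)$, identify the expected cross-entropy as $H(\rvy \mid \rvx_i, x_s)$, and use $I(\rvy; \rvx_i \mid x_s) = H(\rvy \mid x_s) - H(\rvy \mid \rvx_i, x_s)$ together with the observation that $H(\rvy \mid x_s)$ is independent of $i$. The paper also appends the regression analogue exactly as you do.
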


\begin{proof}
    Following \cref{eq:one-step}, the policy network's selection $i = \pi(x_s)$ incurs the following expected loss with respect to the distribution $p(\rvy, \rvx_i \mid x_s)$:

    \begin{align*}
        \E_{\rvy, \rvx_i \mid x_s} \big[ \ell(f^*(x_s \cup \rvx_i), \rvy) \big]
        &= \E_{\rvy, \rvx_i \mid x_s} \big[ \ell(p(\rvy \mid \rvx_i, x_s), \rvy) \big] \\
        &= \E_{\rvx_i \mid x_s} \Big[ \E_{\rvy \mid \rvx_i, x_s} [\ell(p(\rvy \mid \rvx_i, x_s), \rvy)] \Big] \\
        &= \E_{\rvx_i \mid x_s} \big[ H(\rvy \mid \rvx_i, x_s) \big] \\
        &= H(\rvy \mid x_s) - I(\rvy; \rvx_i \mid x_s).
    \end{align*}

    Note that $H(\rvy \mid x_s)$ is a constant that does not depend on $i$. When identifying the index that minimizes the
    % yields the minimum
    expected loss, we therefore have the following result:

    \begin{align*}
        \argmin_i \; \E_{\rvy, \rvx_i \mid x_s} \big[ \ell(f^*(x_s \cup \rvx_i), \rvy) \big] = \argmax_i \; I(\rvy; \rvx_i \mid x_s).
    \end{align*}

    In the case of a continuous response with squared error loss and an optimal predictor given by $f^*(x_s) = \E[\rvy \mid x_s]$, we have a similar result:

    \begin{align*}
        \E_{\rvy, \rvx_i \mid x_s} \big[ (f^*(x_s \cup \rvx_i) - \rvy)^2 \big]
        &= \E_{\rvy, \rvx_i \mid x_s} \big[ (\E[\rvy \mid \rvx_i, x_s] - \rvy)^2 \big] \\
        &= \E_{\rvx_i \mid x_s} \Big[ \E_{\rvy \mid \rvx_i, x_s} [(\E[\rvy \mid \rvx_i, x_s] - \rvy)^2] \Big] \\
        &= \E_{\rvx_i \mid x_s} [\Var(\rvy \mid \rvx_i, x_s)].
    \end{align*}

    When we aim to minimize the expected loss, our selection is therefore the index that yields the lowest expected conditional variance:

    \begin{align*}
        \argmin_i \; \E_{\rvx_i \mid x_s} [\Var(\rvy \mid \rvx_i, x_s)].
    \end{align*}
\end{proof}

Next, we also prove the limiting result presented in \cref{eq:limiting}, which states that $I_i^n \to I(\rvy; \rvx_i \mid x_s)$.

\begin{proof}
    The conditional mutual information $I(\rvy; \rvx_i \mid x_s)$ is defined as follows \citep{cover2012elements}:

    \begin{align*}
        I(\rvy; \rvx_i \mid x_s) &= \KL \big( p(\rvx_i, \rvy \mid x_s) \mid\mid p(\rvx_i \mid x_s) p(\rvy \mid x_s) \big) \\
        &= \E_{\rvy, \rvx_i \mid x_s} \Big[ \log \frac{p(\rvy, \rvx_i \mid x_s)}{p(\rvx_i \mid x_s) p(\rvy \mid x_s)} \Big].
    \end{align*}

    Rearranging terms, we can write this as an expected KL divergence with respect to $\rvx_i$:

    \begin{align*}
        I(\rvy; \rvx_i \mid x_s) &= \E_{\rvx_i \mid x_s} \E_{\rvy \mid x_s, \rvx_i} \Big[ \log \frac{p(\rvy, \rvx_i \mid x_s)}{p(\rvx_i \mid x_s) p(\rvy \mid x_s)} \Big] \\
        &= \E_{\rvx_i \mid x_s} \E_{\rvy \mid x_s, \rvx_i} \Big[ \log \frac{p(\rvy \mid \rvx_i, x_s)}{p(\rvy \mid x_s)} \Big] \\
        &= \E_{\rvx_i \mid x_s} \Big[ \KL\big( p(\rvy \mid \rvx_i, x_s) \mid\mid p(\rvy \mid x_s)\big) \Big]
    \end{align*}

    Now, when we sample multiple values $x_i^1, \ldots, x_i^n \sim p(\rvx_i \mid x_s)$ and make predictions using the Bayes classifier, we have the following mean prediction as $n$ becomes large:

    \begin{equation*}
        \lim_{n \to \infty} \; \frac{1}{n} \sum_{j = 1}^n p(\rvy \mid x_s, x_i^j) = \E_{\rvx_i \mid x_s} \big[ p(\rvy \mid \rvx_i, x_s) \big] = p(\rvy \mid x_s).
    \end{equation*}

    Calculating the mean KL divergence relative to this prediction, we arrive at the following result:

    \begin{align*}
        \lim_{n \to \infty} \; I_i^n &= \E_{\rvx_i \mid x_s} \Big[ \KL\big( p(\rvy \mid \rvx_i, x_s) \mid\mid p(\rvy \mid x_s)\big) \Big] = I(\rvy ; \rvx_i \mid x_s).
    \end{align*}
\end{proof}

\begin{appthm}
    When $\rvy$ is discrete and $\ell$ is cross-entropy loss, the global optimum of \cref{eq:objective} is a predictor that satisfies $f(x_s; \theta^*) = p(\rvy \mid x_s)$ and a policy $\pi(x_s; \phi^*)$ that puts all probability mass on $i^* = \argmax_i I(\rvy; \rvx_i \mid x_s)$.
\end{appthm}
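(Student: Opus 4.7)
The plan is to exploit the linearity of the objective in both the policy's probability vector and the outer expectations, reducing the joint minimization to a family of independent pointwise subproblems that invoke \Cref{prop:bayes} and \Cref{prop:cmi}. Expanding the expectation over $i \sim \pi(\rvx_\rvs; \phi)$ as a weighted sum, the objective can be rewritten as
\begin{align*}
\mathcal{L}(\theta, \phi) = \E_{p(\rvs)} \E_{p(\rvx_\rvs)} \Big[ \sum_i \pi(\rvx_\rvs; \phi)_i \cdot \E_{\rvy, \rvx_i \mid \rvx_\rvs}\big[ \ell(f(\rvx_\rvs \cup \rvx_i; \theta), \rvy) \big] \Big].
\end{align*}
Since $p(s) > 0$ for every $|s| < d$, the global minimum is attained if and only if the bracketed integrand is minimized (almost surely) at each realization of $(s, x_s)$, which decouples the problem into independent local subproblems.

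For each fixed $(s, x_s)$, I would first hold $\pi$ fixed and observe that every term in the sum is exactly the one-step-ahead loss of \cref{eq:one-step} for the deterministic selection $i$. \Cref{prop:bayes} therefore implies that, independently of the policy's weights, the loss on each $(s, i)$ pair is minimized by setting $f(x_s \cup x_i; \theta) = p(\rvy \mid x_s, x_i)$. Because the supports of $p(\rvs)$ and $p(\rvx_\rvs)$ cover every feature subset of size $1, \ldots, d$ that can arise as $s \cup \{i\}$, and the predictor's function class is assumed expressive enough to represent the Bayes classifier, we obtain $f(x_t; \theta^*) = p(\rvy \mid x_t)$ on every such subset.

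Plugging the Bayes classifier into the inner expectation yields the identity $\E_{\rvy, \rvx_i \mid x_s}[\ell(f^*(x_s \cup \rvx_i), \rvy)] = H(\rvy \mid x_s) - I(\rvy; \rvx_i \mid x_s)$ already derived in the proof of \Cref{prop:cmi}. The pointwise objective at $(s, x_s)$ thus becomes $H(\rvy \mid x_s) - \sum_i \pi(x_s; \phi)_i \, I(\rvy; \rvx_i \mid x_s)$, which is linear in the probability vector $\pi(x_s; \phi)$ and is therefore minimized by placing all mass on the CMI-maximizing index; the uniqueness assumption guarantees a single point mass at $i^* = \argmax_i I(\rvy; \rvx_i \mid x_s)$. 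The main obstacle I anticipate is the apparent circularity of the joint optimization over $(\theta, \phi)$, since at first glance the optimal predictor and the optimal policy are each defined in terms of the other. The resolution, and the crux of the argument, is that \Cref{prop:bayes} provides a \emph{policy-independent} characterization of $f^*$: no matter which $i$ the policy selects, the loss-minimizing prediction on input $x_s \cup x_i$ is the Bayes classifier. This breaks the circularity and lets the joint minimization be carried out sequentially, first fixing $f = f^*$ and then optimizing $\pi$.
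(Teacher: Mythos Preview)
Your proposal is correct and follows essentially the same route as the paper's proof: both arguments first establish that the Bayes classifier is the optimal predictor independently of the policy (via \Cref{prop:bayes}), then plug this into the one-step-ahead loss to reduce the policy optimization to a linear program over the simplex whose vertex solution is the CMI-maximizing index (via the identity from \Cref{prop:cmi}). Your write-up is somewhat more explicit than the paper's about the pointwise decomposition over $(s, x_s)$ and about why the apparent circularity of the joint optimization is broken, but the underlying logic is identical.
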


\begin{proof}
    We first consider the predictor network $f(\rvx_s; \theta)$. When the predictor is given the feature values $x_s$, it means that one index $i \in s$ was chosen by the policy according to $\pi(x_{s \setminus i}; \phi)$ and the remaining indices $s \setminus i$ were sampled from $p(\rvs)$. Because $\rvs$ is sampled independently from $(\rvx, \rvy)$, and because $\pi(x_{s \setminus i}; \phi)$ is not given access to $(\rvx_{[d] \setminus s}, \rvx_i, \rvy)$, the predictor's expected loss must be considered with respect to the distribution $\rvy \mid x_s$. The globally optimal predictor $f(x_s; \theta^*)$ is thus defined as follows, regardless of the selection policy $\pi(x_s; \phi)$ and which index $i$ was selected last:

    \begin{align*}
        f(x_s; \theta^*) &= \argmin_{\hat y} \; \E_{\rvy \mid x_s}\big[ \ell(\hat y, \rvy) \big] = p(\rvy \mid x_s).
    \end{align*}

    The above result follows from our proof for \Cref{prop:bayes}. Now, given the optimal predictor $f(x_s; \theta^*)$, we can define the globally optimal policy by minimizing the expected loss for a fixed input $x_s$. Denoting the probability mass placed on each index $i \in [d]$ as $\pi_i(x_s; \phi)$, where $\pi(x_s; \phi) \in \Delta^{d-1}$, the expected loss is the following:

    \begin{align*}
        \E_{i \sim \pi(x_s; \phi)} \E_{\rvy, \rvx_i \mid x_s} \big[ \ell(f(x_s \cup \rvx_i; \theta^*), \rvy) \big]
        &= \sum_{i \in [d]} \pi_i(x_s; \phi) \E_{\rvy, \rvx_i \mid x_s} \big[ \ell\big(f(x_s \cup \rvx_i; \theta^*), \rvy\big) \big] \\
        &= \sum_{i \in [d]} \pi_i(x_s; \phi) \E_{\rvx_i \mid x_s} [H(\rvy \mid \rvx_i, x_s)].
    \end{align*}

    The above result follows from our proof for \Cref{prop:cmi}. If there exists a single index $i^* \in [d]$ that yields the lowest expected conditional entropy, or

    \begin{align*}
        \E_{\rvx_{i^*} \mid x_s} [H(\rvy \mid \rvx_{i^*}, x_s)] < \E_{\rvx_i \mid x_s} [H(\rvy \mid \rvx_i, x_s)] \quad \forall i \neq i^*,
    \end{align*}

    then the optimal predictor must put all its probability mass on $i^*$, or $\pi_{i^*}(x_s; \phi^*) = 1$. Note that the corresponding feature $\rvx_{i^*}$ has maximum conditional mutual information with $\rvy$, because we have

    \begin{align*}
        I(\rvy; \rvx_{i^*} \mid x_s) = \underbrace{H(\rvy \mid x_s)}_{\mathrm{Constant}} - \E_{\rvx_{i^*} \mid x_s} [H(\rvy \mid \rvx_{i^*}, x_s)].
    \end{align*}

    To summarize, we derived the global optimum to our objective $\mathcal{L}(\theta, \phi)$ by first considering the optimal predictor $f(\rvx_s; \theta^*)$, and then considering the optimal policy $\pi(\rvx_s; \phi^*)$ when we assume that we use the optimal predictor.
\end{proof}

\begin{appthm}
    When $\rvy$ is continuous and $\ell$ is squared error loss, the global optimum of \cref{eq:objective} is a predictor that satisfies $f(x_s; \theta^*) = \E[\rvy \mid x_s]$ and a policy $\pi(x_s; \phi^*)$ that puts all probability mass on $i^* = \argmin_i \E_{\rvx_i \mid x_s}[\Var(\rvy \mid \rvx_i, x_s)]$.
\end{appthm}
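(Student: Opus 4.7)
The plan is to mirror the structure of the proof of Theorem 1, substituting the regression versions of Propositions 1 and 2 (both already derived in the proofs above) for their classification counterparts. The objective in \cref{eq:objective} decomposes into an inner optimization over the predictor for each fixed input $x_s$, followed by an optimization over the policy given the optimal predictor, and each step admits a clean closed form.

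First I would fix the policy and show that the optimum of the predictor is independent of $\pi$. Given that the predictor receives input $x_s$, there is no additional information to condition on: the subset $\rvs \sim p(\rvs)$ is independent of $(\rvx, \rvy)$, and the policy $\pi(x_{s \setminus i}; \phi)$ only sees the already-selected feature values, so the distribution of $\rvy$ given the predictor's input is simply $p(\rvy \mid x_s)$. Minimizing the squared-error loss under this conditional reduces, by the regression calculation already carried out in the proof of Proposition 1, to $f(x_s; \theta^*) = \E[\rvy \mid x_s]$, irrespective of $\pi$ and of which index in $s$ happened to be chosen last.

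Next, assuming the optimal predictor, I would optimize the policy for each fixed $x_s$. Writing $\pi_i(x_s; \phi)$ for the probability the policy assigns to index $i$, the expected loss becomes
\begin{equation*}
    \sum_{i \in [d]} \pi_i(x_s; \phi)\, \E_{\rvy, \rvx_i \mid x_s}\bigl[(f(x_s \cup \rvx_i; \theta^*) - \rvy)^2\bigr] = \sum_{i \in [d]} \pi_i(x_s; \phi)\, \E_{\rvx_i \mid x_s}[\Var(\rvy \mid \rvx_i, x_s)],
\end{equation*}
where the equality uses the regression identity derived in the proof of Proposition 2. This is a linear function over the simplex $\Delta^{d-1}$, so it is minimized by placing all mass on any index achieving the minimum expected conditional variance; under the standing uniqueness assumption, this is $i^* = \argmin_i \E_{\rvx_i \mid x_s}[\Var(\rvy \mid \rvx_i, x_s)]$.

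The main obstacle — really the only subtle point worth dwelling on — is the first step: justifying why, conditional on the predictor's input $x_s$, the response is still distributed as $p(\rvy \mid x_s)$ even though one index in $s$ was chosen adaptively by $\pi$ based on a strict subset of those very features. This relies on the independence of $\rvs$ from $(\rvx, \rvy)$ together with the fact that $\pi$ never observes $\rvy$ or the unselected features, so no information about $\rvy$ beyond what is in $x_s$ leaks into the act of selection. Once this is granted, the remainder is a routine minimization of a quadratic (for the predictor) and of a linear objective over the simplex (for the policy), both of which follow directly from results already established above.
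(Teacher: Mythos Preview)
Your proposal is correct and follows essentially the same approach as the paper: the paper's proof explicitly says it ``follows the same logic'' as the classification case, first deriving $f(x_s;\theta^*)=\E[\rvy\mid x_s]$ from the regression version of \Cref{prop:bayes}, then expanding the policy's expected loss as $\sum_i \pi_i(x_s;\phi)\,\E_{\rvx_i\mid x_s}[\Var(\rvy\mid \rvx_i,x_s)]$ via the regression version of \Cref{prop:cmi} and concluding that all mass must go on the minimizing index. Your discussion of why the conditional on the predictor's input is still $p(\rvy\mid x_s)$ matches the justification the paper gives in the proof of the classification theorem.
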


\begin{proof}
    Our proof follows the same logic as our proof for \Cref{thm:classification}. For the optimal predictor given an arbitrary policy, we have:

    \begin{align*}
        f(x_s; \theta^*) = \argmin_{\hat y} \; \E_{\rvy \mid x_s} \big[ (\hat y - \rvy)^2 \big] = \E[\rvy \mid x_s].
    \end{align*}

    Then, for the policy's expected loss, we have:

    \begin{align*}
        \E_{i \sim \pi(x_s; \phi)} \E_{\rvy, \rvx_i \mid x_s} \big[ \big(f(x_s \cup \rvx_i; \theta^*) - \rvy\big)^2 \big]
        % \sum_{i \in [d]} \pi(x_s; \phi) \E_{\rvy, \rvx_i \mid x_s} \Big[ \big( f(x_s \cup \rvx_i; \theta^*) - \rvy \big)^2 \Big]
        &= \sum_{i \in [d]} \pi_i(x_s; \phi) \E_{\rvx_i \mid x_s} [\Var(\rvy \mid \rvx_i, x_s)].
    \end{align*}

    If there exists an index $i^* \in [d]$ that yields the lowest expected conditional variance, then the optimal policy must put all its probability mass on $i^*$, or $\pi_{i^*}(x_s; \phi^*) = 1$.
\end{proof}

\clearpage
\section{Datasets} \label{app:datasets}
The datasets used in our experiments are summarized in \Cref{tab:datasets}. Three of the tabular datasets and the two image classification datasets are publicly available, and the three emergency medicine tasks were privately curated from
% a university medical center.
the Harborview Medical Center Trauma Registry.

\begin{table}[h!]
    \caption{Summary of datasets used in our experiments.} \label{tab:datasets}
    \vskip 0.1cm
    \begin{center}
    \begin{tabular}
    {lcccc}
    \toprule
    Dataset & \# Features & \# Feature Groups & \# Classes & \# Samples \\
    \midrule
    Fluid & 224 & 162 & 2 & 2,770 \\
    Respiratory & 112 & 35 & 2 &  65,515 \\
    Bleeding & 121 & 44 & 2 & 6,496\\
    \midrule
    Spam  &  58 & -- & 2 & 4,601  \\ 
    MiniBooNE  & 51 & -- & 2 & 130,064 \\ 
    Diabetes & 45 & -- & 3 & 92,062 \\
    \midrule
    MNIST  & 784 & -- & 10 & 60,000\\
    CIFAR-10 & 1,024 & 64 & 10 & 60,000 \\
    \bottomrule
    \end{tabular}
    \end{center}
\end{table}

\subsection{MiniBooNE and spam classification}

The spam dataset includes features extracted from e-mail messages to predict whether or not a message is spam. Three features describes the usage of capital letters in the e-mail, and the remaining 54 features describe the frequency with which certain key words or characters are used. The MiniBooNE particle identification dataset involves distinguishing electron neutrinos from muon neutrinos based on various continuous features \citep{roc2005boosted}. Both datasets were obtained from the UCI repository \citep{dua2017uci}.

\subsection{Diabetes classification}

The diabetes dataset was obtained from from the National Health and Nutrition Examination Survey (NHANES) \citep{NHANES}, an ongoing survey designed to assess the well-being of adults and children in the United States. We used a version of the data pre-processed by \citet{kachuee2018opportunistic, kachuee2019cost} that includes data collected from 1999 through 2016. The input features include demographic information (age, gender, ethnicity, etc.), lab results (total cholesterol, triglyceride, etc.), examination data (weight, height, etc.), and questionnaire answers (smoking, alcohol, sleep habits, etc.). An expert was also asked to suggest costs for each feature based on the financial burden, patient privacy, and patient inconvenience, but we assume uniform feature costs in our experiments. Finally, the fasting glucose values were used to define three classes based on standard threshold values: normal, pre-diabetes, and diabetes.

\subsection{Image classification datasets}

The MNIST and CIFAR-10 datasets were downloaded using PyTorch \citep{paszke2017automatic}. We used the standard train-test splits, and we split the train set to obtain a validation set with the same size as the test set (10,000 examples).

\subsection{Emergency medicine datasets}

The emergency medicine datasets used in this study were gathered over a 13-year period (2007-2020) and encompass 14,463 emergency department admissions. We excluded patients under the age of 18, and we curated 3 clinical cohorts commonly seen in pre-hospitalization settings. These include 1)~pre-hospital fluid resuscitation, 2)~emergency department respiratory support, and 3)~bleeding after injury. These datasets are not publicly available due to patient privacy concerns.

\paragraph{Pre-hospital fluid resuscitation}
We selected 224 variables that were available in the pre-hospital setting, including dispatch information (injury date, time, cause, and location), demographic information (age, sex), and pre-hospital vital signs (blood pressure, heart rate, respiratory rate). The outcome was each patient's response to fluid resuscitation, following the Advanced Trauma Life Support (ATLS) definition \citep{subcommittee2013advanced}. 

\paragraph{Emergency department respiratory support}
In this cohort, our goal is to predict which patients require respiratory support upon arrival in the emergency department. Similar to the previous dataset, we selected 112 pre-hospital clinical features including dispatch information (injury date, time, cause, and location), demographic information (age, sex), and pre-hospital vital signs (blood pressure, heart rate, respiratory rate). The outcome is defined based on whether a patient received respiratory support, including both invasive (intubation) and non-invasive (BiPap) approaches.

\paragraph{Bleeding}
In this cohort, we only included patients whose fibrinogen levels were measured, as this provides an indicator for bleeding or fibrinolysis \citep{mosesson2005fibrinogen}. As with the previous datasets, demographic information, dispatch information, and pre-hospital observations were used as input features. The outcome, based on experts' opinion, was defined by whether an individual's fibrinogen level is below 200 mg/dL, which represents higher risk of bleeding after injury.

\section{Baselines} \label{app:baselines}

This section provides more details on the baseline methods used in our experiments (\Cref{sec:experiments}).

\subsection{Global feature importance methods}

Two of our static feature selection baselines, permutation tests and SAGE, are \textit{global feature importance methods} that rank features based on their role in improving model accuracy \citep{covert2021explaining}. In our experiments, we ran each method using a single classifier trained on the entire dataset, and we then selected the top $k$ features depending on the budget.

When running the permutation test, we calculated the validation AUROC while replacing values in the corresponding feature column with random draws from the training set. When running SAGE, we used the authors' implementation with automatic convergence detection \citep{covert2020understanding}. To handle held-out features, we averaged across 128 sampled values for the six tabular datasets, and for MNIST we used a zeros baseline to achieve faster convergence.

\subsection{Local feature importance methods}

Two of our static feature selection baselines, DeepLift and Integrated Gradients, are \textit{local feature importance methods} that rank features based on their importance to a single prediction. In our experiments, we generated feature importance scores for the true class using all examples in the validation set. We then selected the top $k$ features based on their mean absolute importance. We used a mean baseline for Integrated Gradients \citep{sundararajan2017axiomatic}, and both methods were run using the Captum package \citep{kokhlikyan2020captum}.

\subsection{Differentiable feature selection}

Our last static feature selection baseline is the Concrete autoencoder (CAE) from \citet{balin2019concrete}. The method was originally proposed to perform unsupervised feature selection by reconstructing the full input vector, but we changed the prediction target to use it in a supervised fashion. The authors propose training with an exponentially decayed temperature over a hand-tuned number of epochs, but we used an approach similar to our own method: we trained with a sequence of temperature values, performing early stopping using the validation loss for each one, and we returned the features chosen after training with the final temperature.

We tried a similar method proposed by \citet{yamada2020feature}, but this method requires tuning a penalty parameter to achieve the desired number of features, and we found that it gave similar performance in our experiments on MNIST. Among methods that learn to select features within a neural network, there are several others that do so using group sparse penalties \cite{feng2017sparse, tank2021neural, lemhadri2021lassonet}; we tested the LassoNet approach from \citet{lemhadri2021lassonet} and found that it was not competitive on MNIST. For simplicity, we present results only for the supervised CAE.

\subsection{CMI estimation}

Our experiments use two versions of the CMI estimation approach described in \Cref{sec:iterative}. Both are inspired by the EDDI method introduced by \citet{ma2019eddi}, but a key difference is that we do not jointly model $(\rvx, \rvy)$ within the same conditional generative model: we instead separately model the response with a classifier $f(\rvx_s) \approx p(\rvy \mid \rvx_s)$ and the features with a generative model of $p(\rvx_i \mid \rvx_s)$. This partially mitigates one challenge with this approach, which is working with mixed continuous/categorical data (i.e., we do not need to jointly model categorical response variables).

For the first version of this approach, we train a PVAE as a generative model \citep{ma2019eddi}. The encoder and decoder both have two hidden layers, the latent dimension is set to 16, and we use 128 samples from the latent posterior to approximate $p(\rvx_i \mid x_s) = \int p(\rvx_i \mid \rvz) p(\rvz \mid x_s)$. We use Gaussian distributions for both the latent and decoder spaces, and we generate samples using the decoder mean, similar to the original approach \citep{ma2019eddi}. In the second version, we bypass the need for a generative model with a simple approximation: we sample features from their marginal distribution, which is equivalent to assuming feature independence.

\subsection{Opportunistic learning}

\citet{kachuee2018opportunistic} proposed Opportunistic Learning (OL), an approach to solve DFS using RL. The model consists of two networks analogous to our policy and predictor: a Q-network that estimates the value associated with each action, where actions correspond to features, and a P-network responsible for making predictions. When using OL, we use the same architectures as our approach, and OL shares network parameters between the P- and Q-networks.

The authors introduce a utility function for their reward, shown in \cref{app:OL_reward}, which calculates the difference in prediction uncertainty as approximated by MC dropout \citep{gal2016dropout}. The reward also accounts for feature costs, but we set all feature costs to $c_i = 1$:

\begin{equation}\label{app:OL_reward}
    r_i = \frac{|| Cert(x_s) - Cert(x_s \cup x_i)||}{c_i}
\end{equation}

To provide a fair comparison with the remaining methods, we made several modifications to the authors' implementation. These include 1)~preventing the prediction action until the pre-specified budget is met, 2)~setting all feature costs to be identical, and 3)~supporting pre-defined feature groups as described in \Cref{app:feature_grouping}. When training, we update the P-, Q-, and target Q-networks every $1 + \frac{d}{100}$ experiences, where $d$ is the number of features in a dataset. In addition, the replay buffer is set to store the $1000 d$ most recent experiences, and the random exploration probability is decayed so that it eventually reaches a value of 0.1.

\clearpage
\section{Training approach and hyperparameters} \label{app:training}

This section provides more details on our training approach and hyperparameter choices.

\subsection{Training pseudocode}

\Cref{alg:method} summarizes our training approach. Briefly, we select features by drawing a Concrete sample using policy network's logits, we calculate the loss based on the subsequent prediction, and we then update the mask for the next step using a discrete sample from the policy's distribution. We implemented this approach using PyTorch \citep{paszke2017automatic} and PyTorch Lightning.\footnote{\url{https://www.pytorchlightning.ai}}

\begin{algorithm}[H]
  \SetAlgoLined
  \DontPrintSemicolon
  \KwInput{Data distribution $p(\rvx, \rvy)$, budget $k > 0$, learning rate $\gamma > 0$, temperature $\tau > 0$}
  \KwOutput{Predictor model $f(\rvx ; \theta)$, policy model $\pi(\rvx ; \phi)$}
  initialize $f(\rvx; \theta), \pi(\rvx; \phi)$ \;
  \While{not converged}{
      sample $x, y \sim p(\rvx, \rvy)$ \;
      initialize $\mathcal{L} = 0$, $m = [0, \ldots, 0]$ \;
      \For{$j = 1$ \KwTo $k$}{
          calculate logits $\alpha = \pi(x \odot m; \phi)$, sample $G_i \sim \text{Gumbel}$ for $i \in [d]$ \;
          set $\tilde{m} = \max\big( m, \softmax(G + \alpha, \tau) \big)$ \tcp{update with Concrete} % \;
          set $m = \max \big(m, \softmax(G + \alpha, 0) \big)$ \tcp{update with one-hot} % \;
          update $\mathcal{L} \gets \mathcal{L} + \ell\big(f(x \odot \tilde{m} ; \theta), y \big)$ \;
      }
      update $\theta \gets \theta - \gamma \nabla_\theta \mathcal{L}, \;\; \phi \gets \phi - \gamma \nabla_\phi \mathcal{L}$ \;
  }
  \Return{$f(\rvx; \theta), \pi(\rvx; \phi)$}
\caption{Training pseudocode}
\label{alg:method}
\end{algorithm}

One notable difference between \Cref{alg:method} and our objective $\mathcal{L}(\theta, \phi)$ in the main text is the use of the policy $\pi(\rvx; \phi)$ for generating feature subsets. This differs from \cref{eq:objective}, which generates feature subsets using a subset distribution $p(\rvs)$. The key shared factor between both approaches is that there are separate optimization problems over each feature set that are effectively treated independently. For each feature set $x_s$, the problem is the one-step-ahead loss, and it incorporates both the policy and predictor as follows:
\begin{equation}
    \E_{i \sim \pi(\rvx_\rvs; \phi)} \big[ \ell\big(f(\rvx_{\rvs} \cup \rvx_i; \theta) , \rvy \big) \big].
\end{equation}
The problems for each subset do not interact: during optimization, the selection given $x_s$ is based only on the immediate change in the loss, and gradients are not propagated through multiple selections as they would be for an RL-based solution. In solving these multiple problems, the difference is simply that \cref{eq:objective} weights them according to $p(\rvs)$, whereas \Cref{alg:method} weights them according to the current policy $\pi(\rvx, \phi)$.

We find that incorporating the current policy when generating feature sets is important to achieve good performance. As an ablation, we tested how much our method's performance changes when we instead generate training examples $(\rvx_\rvs, \rvy)$ at random rather than using the current policy: using the MNIST dataset,
% with varying numbers of features,
we find that using random subsets leads to a significant drop in performance (\Cref{tab:ablation}).

\subsection{Model selection}

One detail not shown in \Cref{alg:method} that we alluded to in the main text is our approach for decaying the Concrete distribution's temperature parameter $\tau$. We train with a sequence of relatively few temperature values, using the validation loss to perform early stopping with each value. To perform model selection, we separately calculate the validation loss using a temperature value of zero, which more accurately represents the model's usage at inference time; we eventually return the version of the model that performed best on this zero-temperature loss, chosen across all training temperatures.

\begin{table*}[t]
\centering
\caption{Ablation experiment using MNIST.} \label{tab:ablation}
\begin{center}
\vskip 0.1cm
\begin{small}
\begin{tabular}{ccccccccc}
\toprule
\# Features & 5 & 10 & 15 & 20 & 25 & 30 & 40 & 50 \\
\midrule
Ours & 0.695 & 0.875 & 0.926 & 0.950 & 0.960 & 0.966 & 0.973 & 0.975 \\
Ablation & 0.578 & 0.757 & 0.807 & 0.819 & 0.838 & 0.850 & 0.869 & 0.883 \\
\bottomrule
\end{tabular}
\end{small}
\end{center}
\end{table*}

\subsection{Hyperparameters}

Our experiments with the six tabular datasets used fully connected architectures with dropout in all layers \citep{srivastava2014dropout}. The dropout probability is set to 0.3, the networks have two hidden layers of width 128, and we performed early stopping using the validation loss. For our method, the predictor and policy were separate networks with identical architectures. When training models with the features selected by static methods, we reported results using the best model from multiple training runs based on the validation loss. We did not perform any additional hyperparameter tuning due to the large number of models being trained.

For MNIST, we used fully connected architectures with two layers of width 512 and the dropout probability set to 0.3. Again, our method used separate networks with identical architectures. For CIFAR-10, we used a shared ResNet backbone \citep{he2016deep} consisting of several residually connected convolutional layers. The classification head consists of global average pooling and a linear layer, and the selection head consisted of a transposed convolution layer followed by a $1 \times 1$ convolution, which outputs a grid of logits with size $8 \times 8$. Our CIFAR-10 networks are trained using random crops and random horizontal flips as augmentations.

\subsection{Feature grouping}\label{app:feature_grouping}

All of the methods used in our experiments were designed to select individual features, but this is undesirable when using categorical features with one-hot encodings. Each of our three emergency medicine tasks involve such features, so we extended each method to support feature grouping.

SAGE and permutation tests are trivial to extend to feature groups: we simply removed groups of features rather than individual features when calculating importance scores. For DeepLift and Integrated Gradients, we used the summed importance within each group, which preserves each method's additivity property. For the method based on Concrete Autoencoders, we implemented a generalized version of the selection layer that operates on feature groups. We also extended OL to operate on feature groups by having actions map to groups rather than individual features.

Finally, for our method, we parameterized the policy network $\pi(\rvx; \phi)$ so that the number of outputs is the number of groups $g$ rather than the total number of features $d$ (where $g < d$). When applying masking, we first generate a binary mask $m \in [0, 1]^g$, and we then project the mask into $[0, 1]^d$ using a binary group matrix $G \in \{0, 1\}^{d \times g}$, where $G_{ij} = 1$ if feature $i$ is in group $j$ and $G_{ij} = 0$ otherwise. Thus, our masked input vector is given by $x \odot (Gm)$.

\section{Additional results} \label{app:results}

This section provides several additional experimental results. First, \Cref{fig:medical-large} and \Cref{fig:tabular-large} show the same results as \Cref{fig:medical} but larger for improved visibility. Next, \Cref{fig:fib-freqency} though \Cref{fig:diabetes-freqency} display the feature selection frequency for each of the tabular datasets when using the greedy method. The heatmaps in each plot show the portion of the time that a feature (or feature group) is selected under a specific feature budget. These plots reveal that our method is indeed selecting different features for different samples.

Finally, \Cref{fig:cifar-large} displays examples of CIFAR-10 predictions given different numbers of revealed patches. The predictions generally become relatively accurate after revealing only a small number of patches, reflecting a similar result as \Cref{fig:images}. Qualitatively, we can see that the policy network learns to select vertical stripes, but the order in which it fills out each stripe depends on where it predicts important information may be located.

% \clearpage
\begin{figure}[ht]
    \centering
    % \vskip -0.5cm
    % \vskip 1cm
    \includegraphics[width=0.7\linewidth]{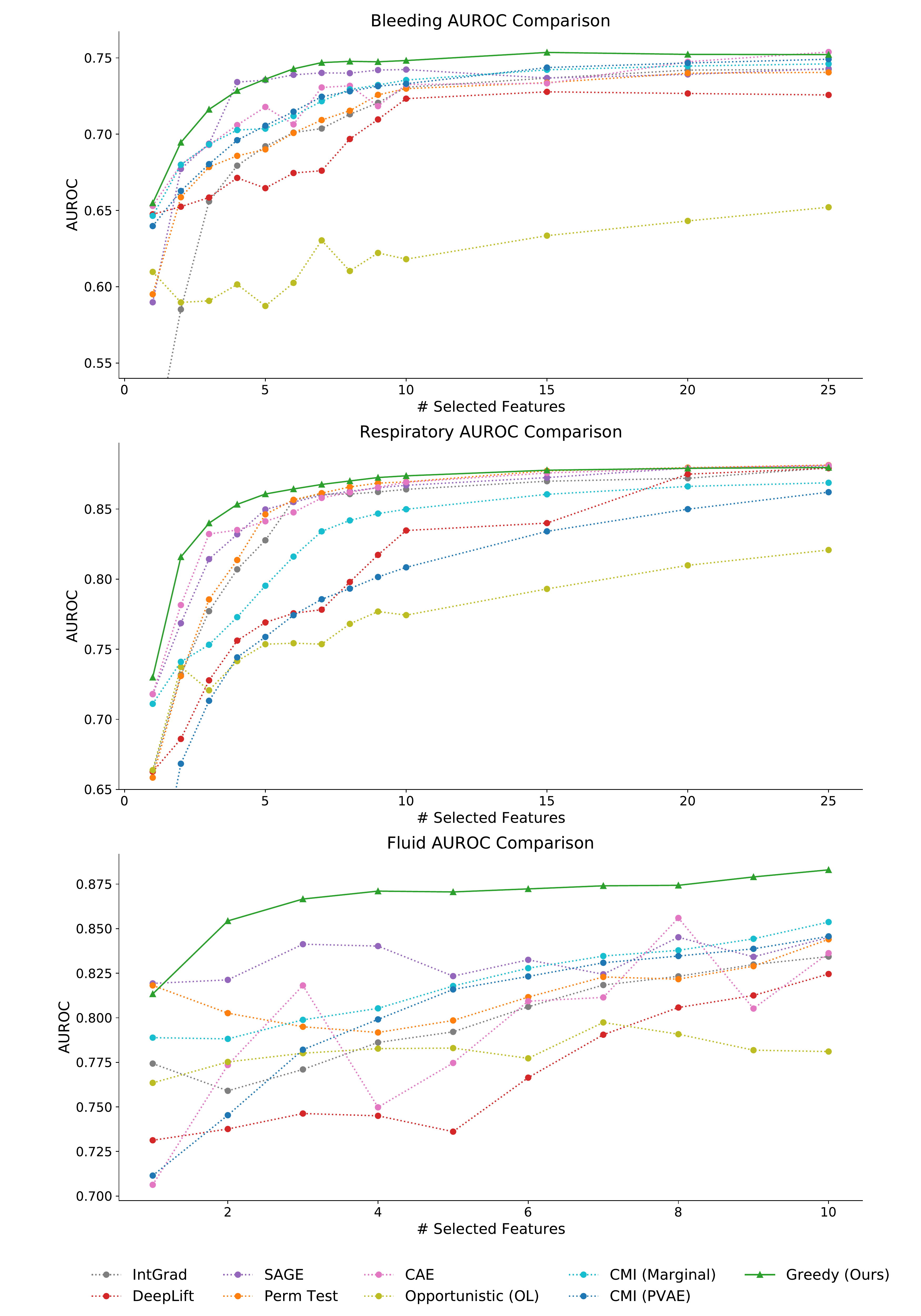}
    \caption{AUROC comparison on the three emergency medicine diagnosis tasks.} \label{fig:medical-large}
\end{figure}

\clearpage
\begin{figure}[t]
    \centering
    \vskip -0.5cm
    \includegraphics[width=0.7\linewidth]{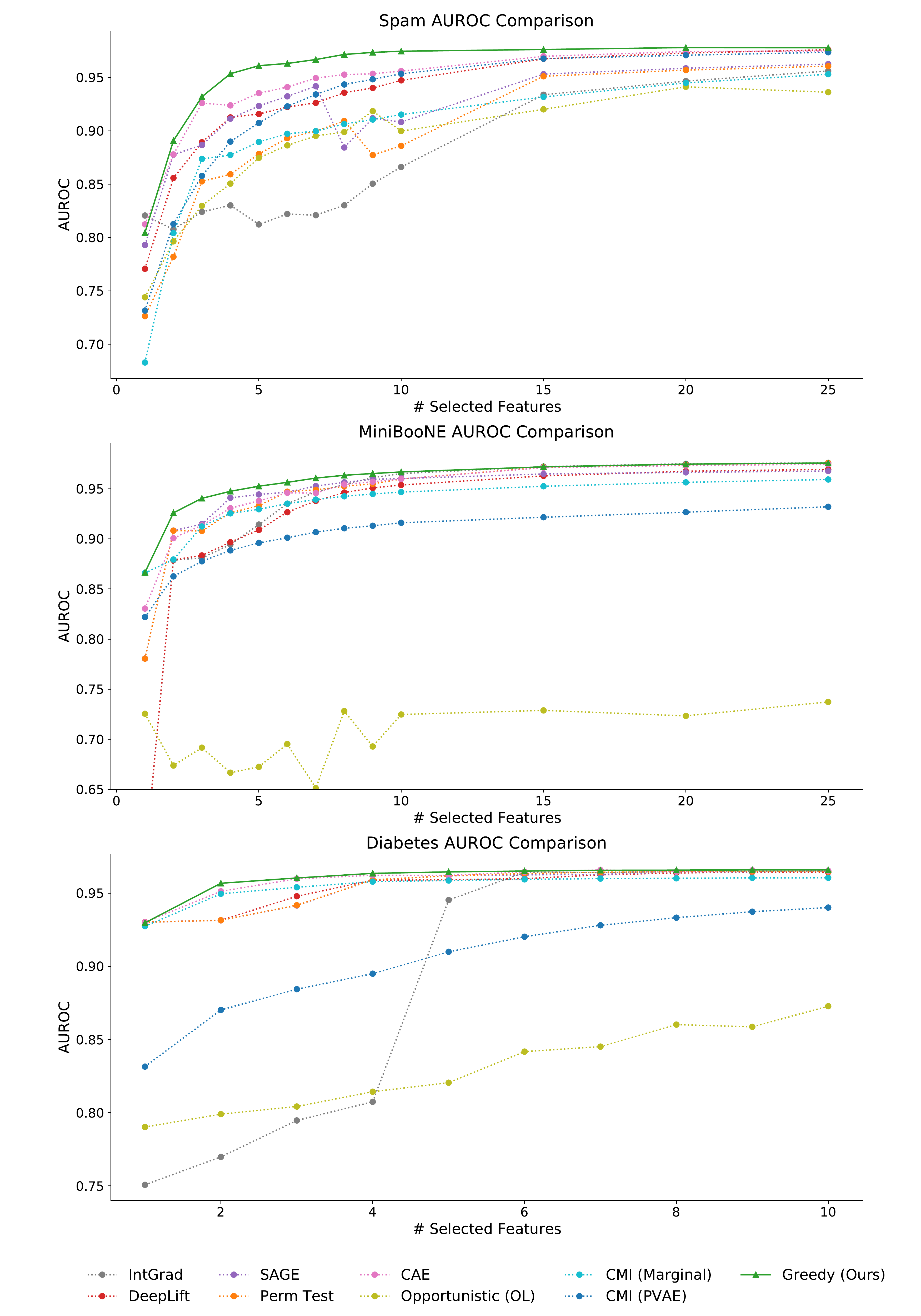}
    \caption{AUROC comparison on the three public tabular datasets.} \label{fig:tabular-large}
\end{figure}

\clearpage
\begin{figure}[t]
    \centering
    \includegraphics[height=5cm]{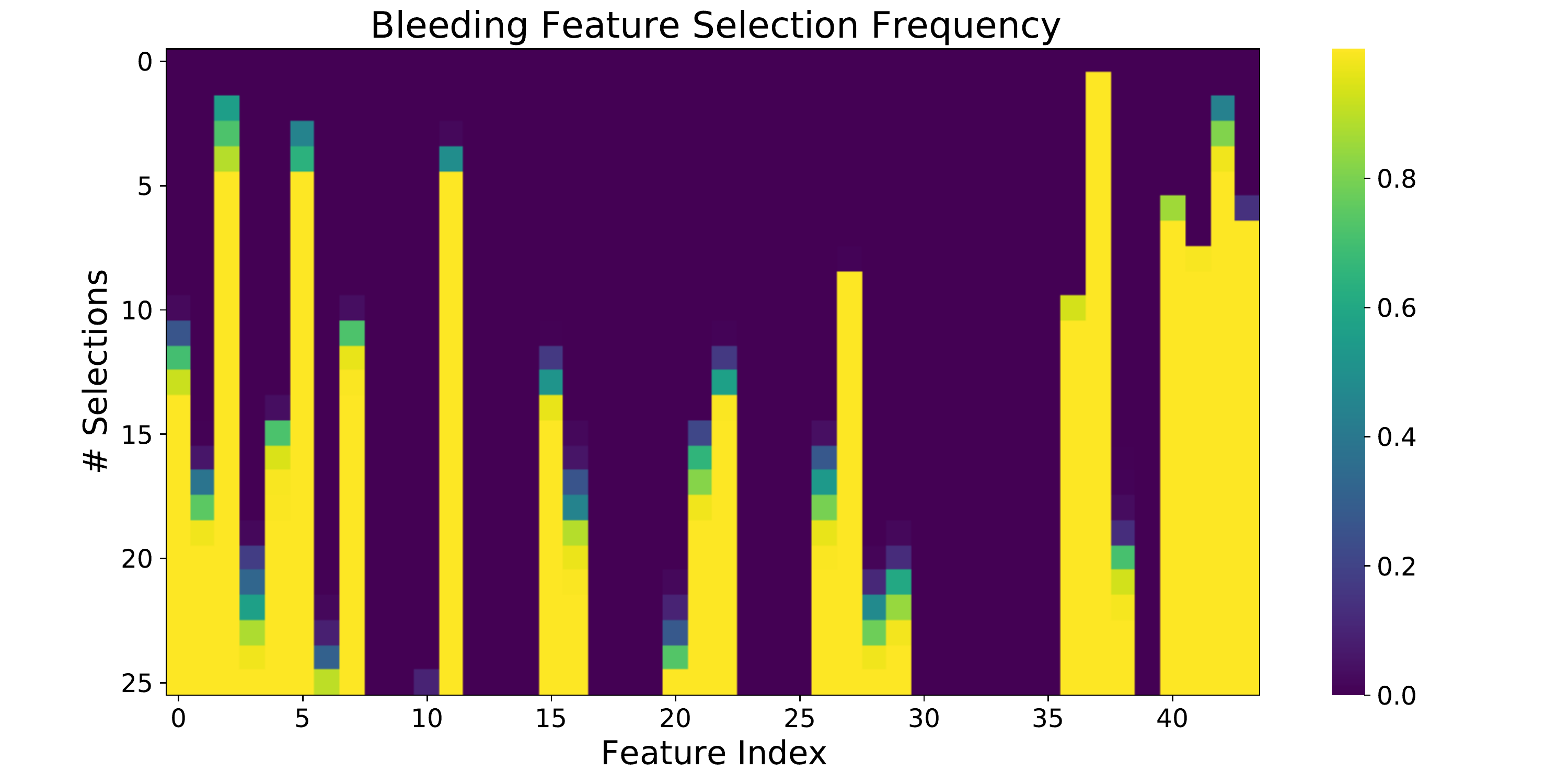}
    \caption{Feature selection frequency for our greedy approach on the bleeding dataset.} \label{fig:fib-freqency}
\end{figure}

\begin{figure}[t]
    \centering
    \includegraphics[height=5cm]{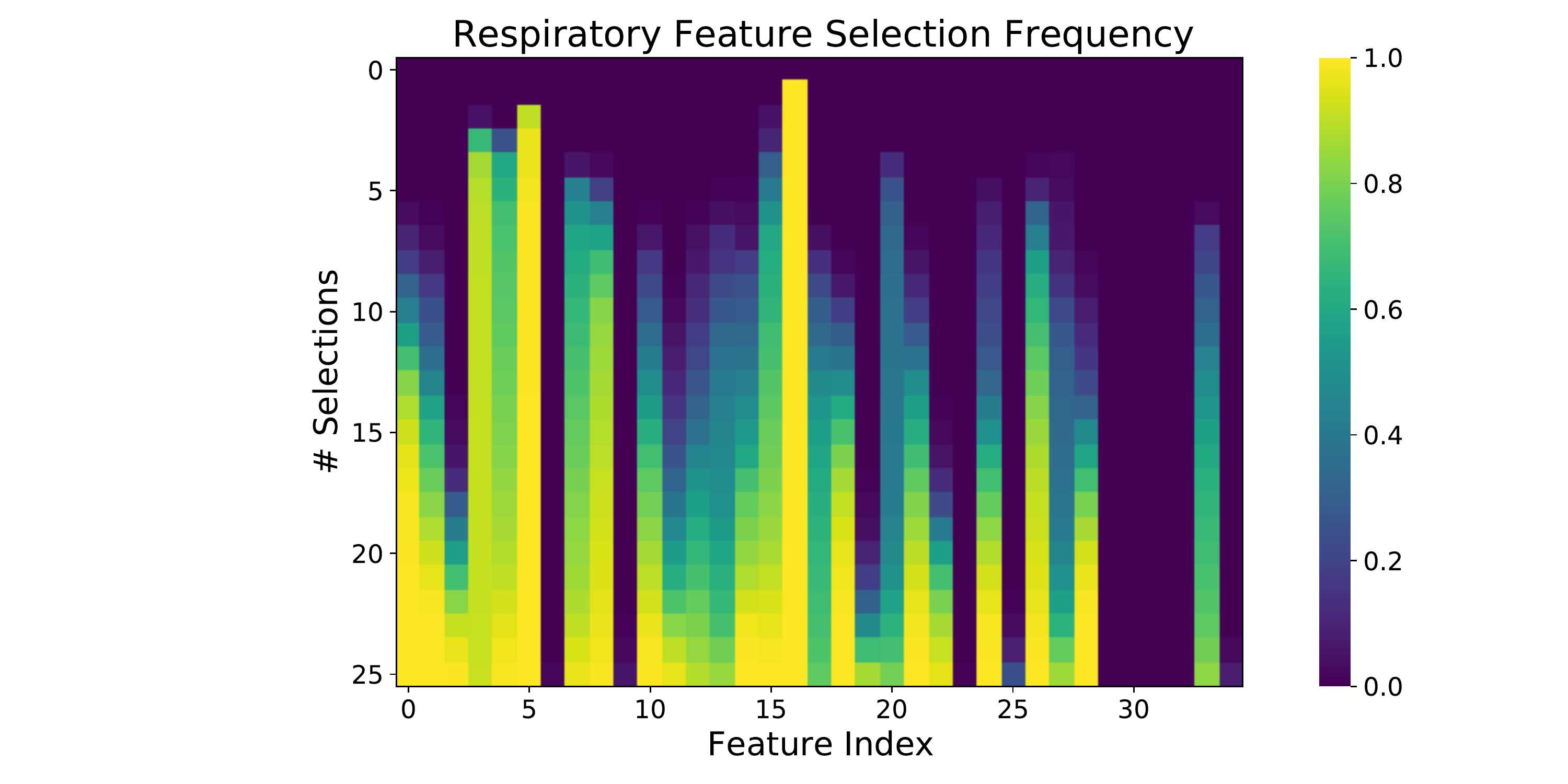}
    \caption{Feature selection frequency for our greedy approach on the respiratory dataset.} \label{fig:intub-freqency}
\end{figure}

\begin{figure}[t]
    \centering
    \includegraphics[height=5cm]{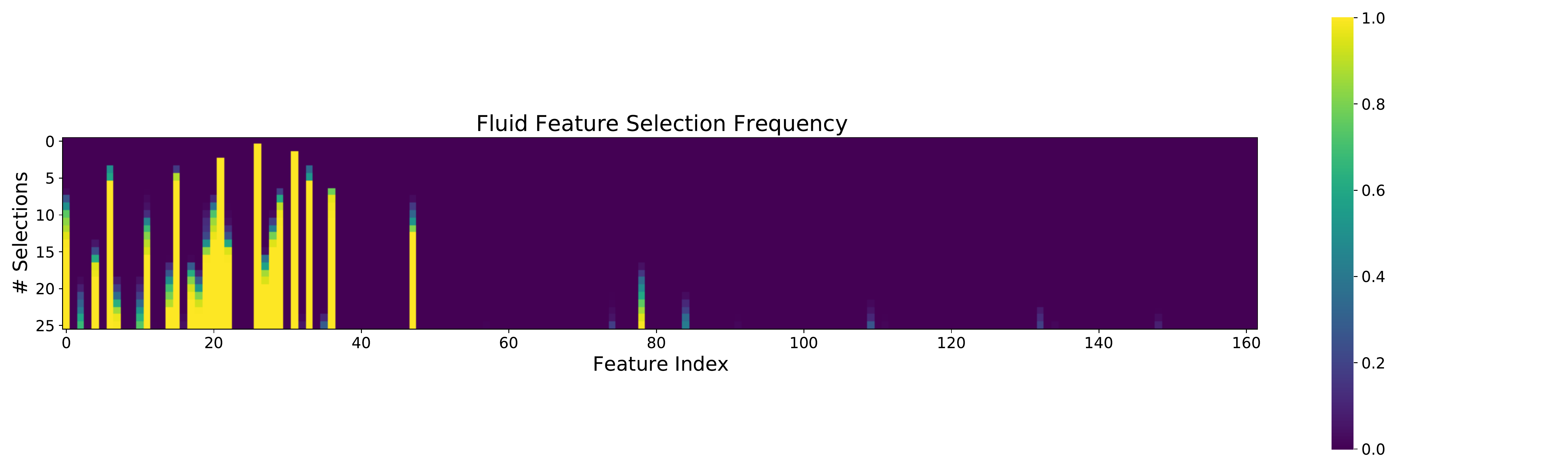}
    \caption{Feature selection frequency for our greedy approach on the fluid dataset.} \label{fig:fluid-freqency}
\end{figure}

\clearpage
\begin{figure}[t]
    \centering
    \includegraphics[height=5cm]{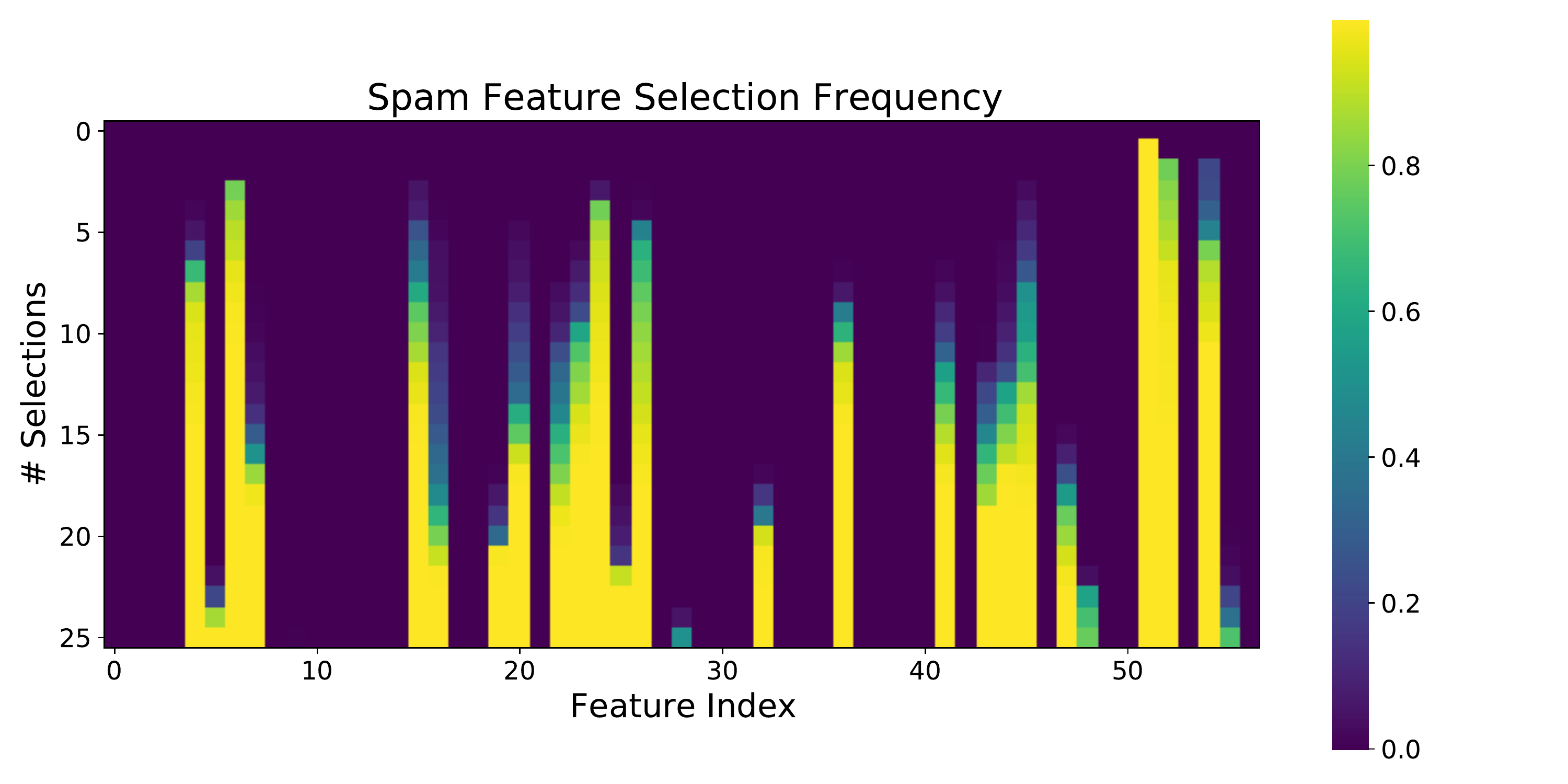}
    \caption{Feature selection frequency for our greedy approach on the spam dataset.} \label{fig:spam-freqency}
\end{figure}

\begin{figure}[t]
    \centering
    \includegraphics[height=5cm]{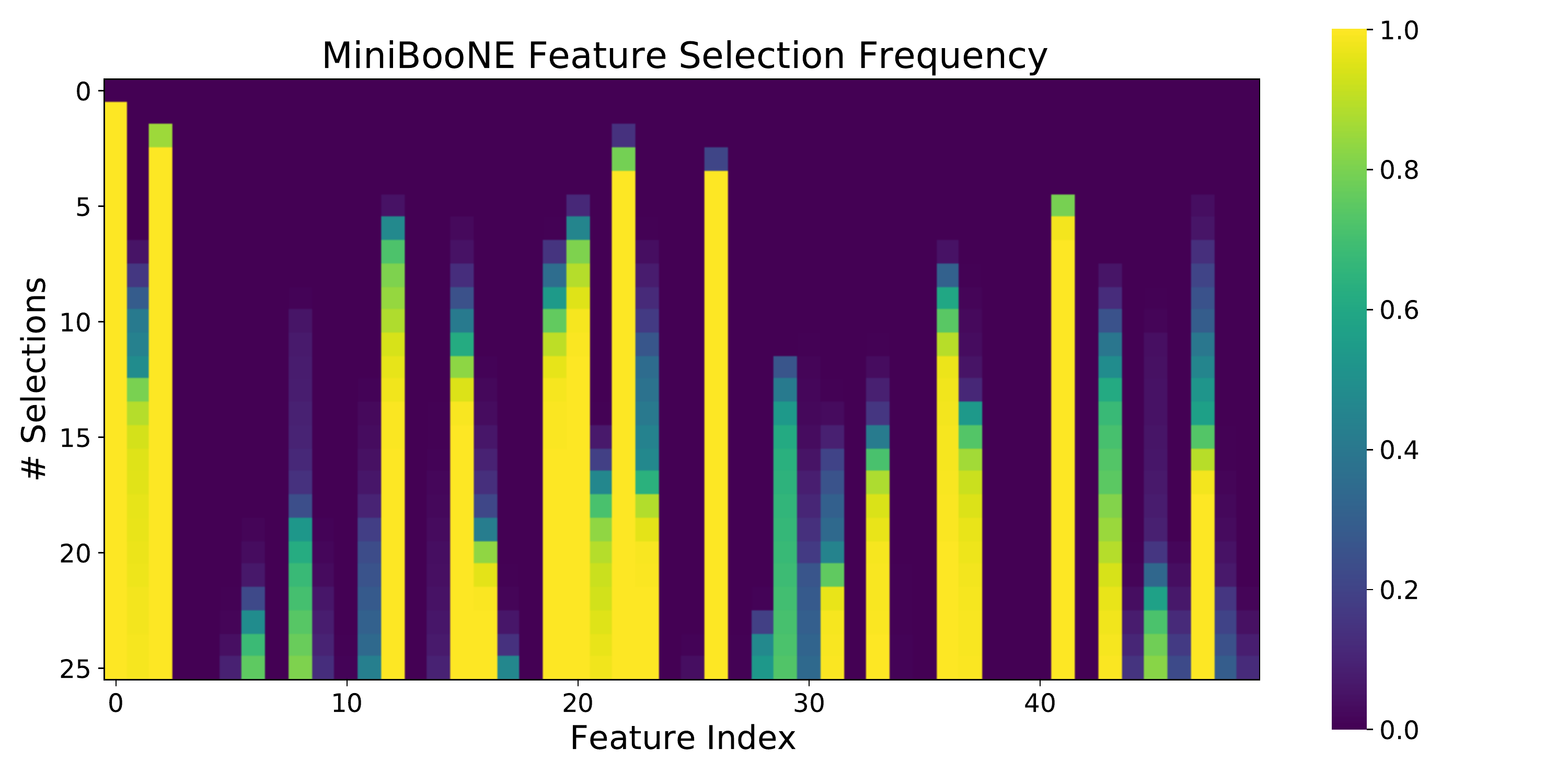}
    \caption{Feature selection frequency for our greedy approach on the MiniBooNE dataset.} \label{fig:miniboone-freqency}
\end{figure}

\begin{figure}[t]
    \centering
    \includegraphics[height=5cm]{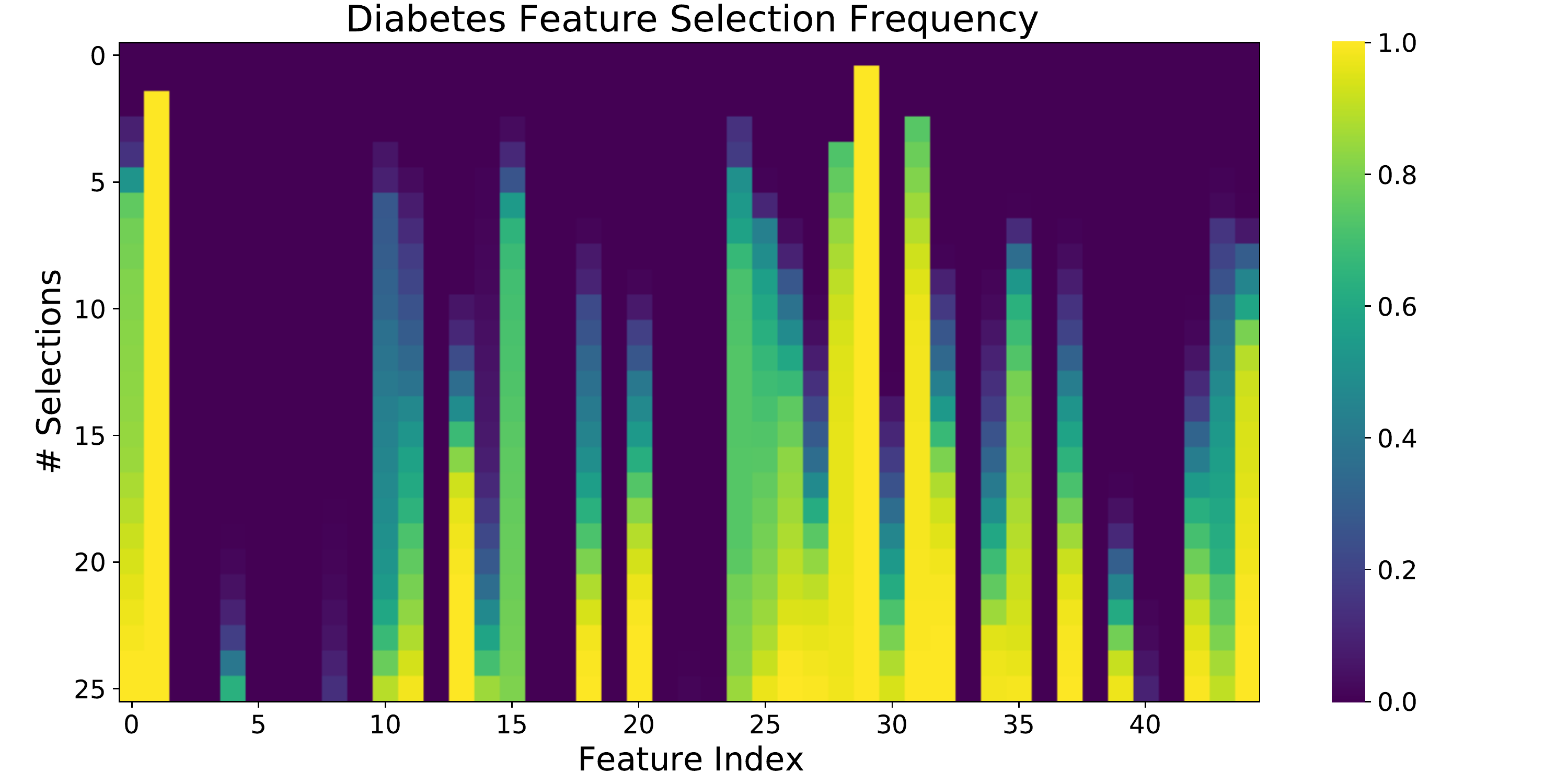}
    \caption{Feature selection frequency for our greedy approach on the diabetes dataset.} \label{fig:diabetes-freqency}
\end{figure}

\clearpage
\begin{figure}[t]
    \centering
    % \vskip -0.5cm
    \includegraphics[width=0.85\linewidth]{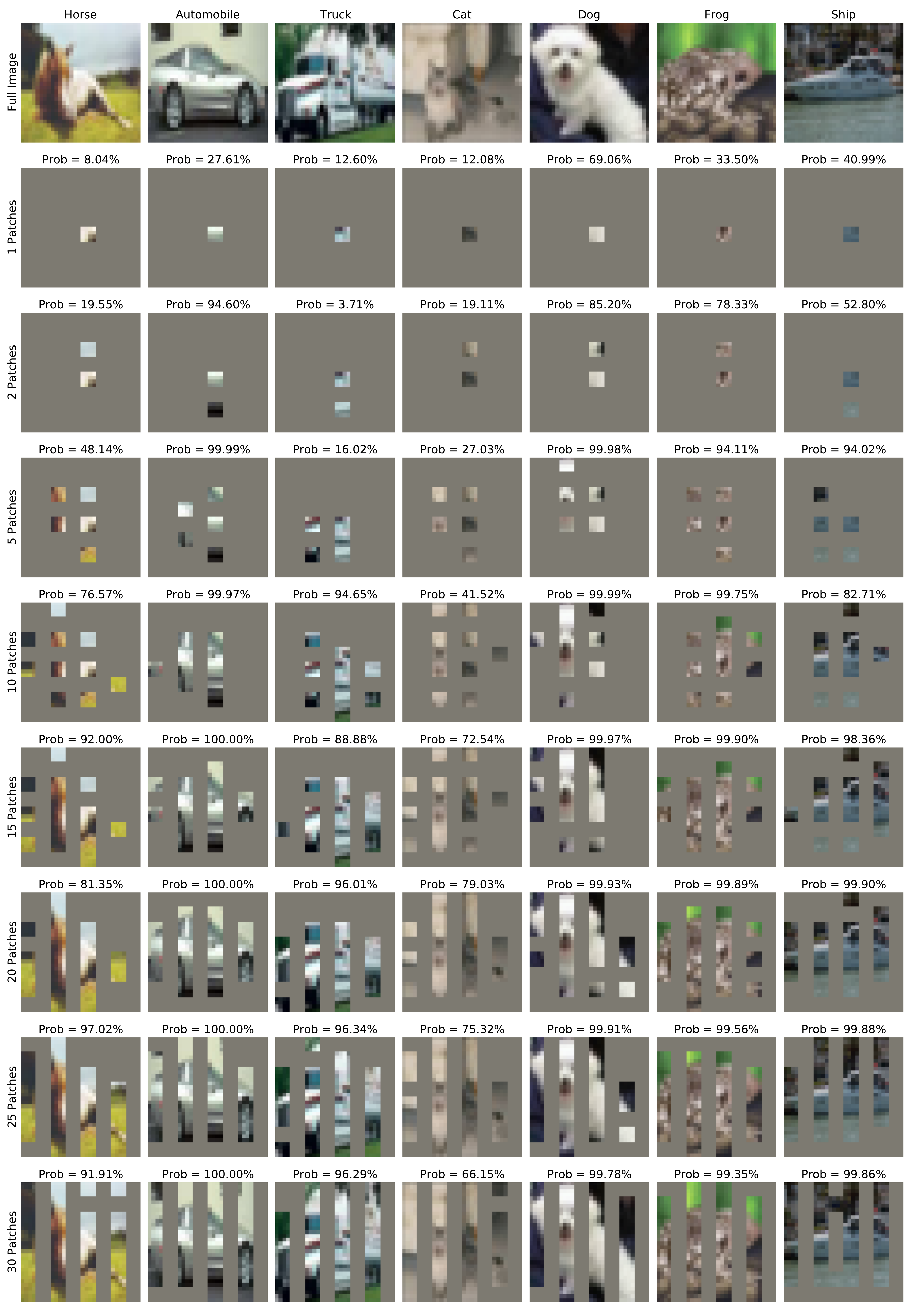}
    \caption{CIFAR-10 predictions with different numbers of patches revealed by our approach.} \label{fig:cifar-large}
\end{figure}

\end{document}